\documentclass{article}

% NeurIPS 2026 style:
% - submission (double-blind main track): use [main]
% - camera-ready: use [main, final]
% - preprint: use [preprint]
\usepackage[main, preprint]{style}

\usepackage[utf8]{inputenc}
\usepackage[T1]{fontenc}
\usepackage{hyperref}
\usepackage{url}
\usepackage{booktabs}
\usepackage{amsfonts}
\usepackage{microtype}
\usepackage{xcolor}
\usepackage{amsmath}
\usepackage{amssymb}
\usepackage{mathtools}
\usepackage{amsthm}
\usepackage{graphicx}
\usepackage{subcaption}
\usepackage{enumitem}

% Drafting helpers
\newcommand{\todo}[1]{}

%%%%%%%%%%%%%%%%%%%%%%%%%%%%%%%%
% THEOREMS
%%%%%%%%%%%%%%%%%%%%%%%%%%%%%%%%
\theoremstyle{plain}
\newtheorem{theorem}{Theorem}[section]

\newtheorem{lemma}[theorem]{Lemma}

\newtheorem{conjecture}[theorem]{Conjecture}
\newtheorem{rsformula}[theorem]{Replica-Symmetry Phase-Transition Formula}
\newtheorem{reduction}[theorem]{Reduction}
\theoremstyle{definition}

\theoremstyle{remark}
\newtheorem{remark}[theorem]{Remark}

% Autoref name customizations (for clickable "Theorem 1", "Lemma 1", etc.)
% Must come AFTER theorem definitions

% Fix autoref to correctly identify lemmas when environments share counters
% Also fix appendix references to show "Appendix" instead of "Section"
% This works by checking the label prefix and setting the appropriate autoref name
% Uses etoolbox (already loaded by microtype) for string comparison
\makeatletter
\let\old@autoref\autoref
% Helper to check if string starts with a given prefix
\newcommand{\@checkprefixandref}[2]{%
  % #1 = full label, #2 = prefix to check (without colon)
  \ifcsname @prefixmatch@#2\endcsname
    \csname @prefixmatch@#2\endcsname{#1}%
  \else
    \old@autoref{#1}%
  \fi
}
% Define prefix handlers
\expandafter\def\csname @prefixmatch@lem\endcsname#1{\hyperref[#1]{\lemmaautorefname~\ref*{#1}}}
\expandafter\def\csname @prefixmatch@app\endcsname#1{\hyperref[#1]{Appendix~\ref*{#1}}}
\expandafter\def\csname @prefixmatch@prop\endcsname#1{\hyperref[#1]{\propositionautorefname~\ref*{#1}}}
\expandafter\def\csname @prefixmatch@cor\endcsname#1{\hyperref[#1]{\corollaryautorefname~\ref*{#1}}}
\expandafter\def\csname @prefixmatch@def\endcsname#1{\hyperref[#1]{\definitionautorefname~\ref*{#1}}}
\expandafter\def\csname @prefixmatch@ass\endcsname#1{\hyperref[#1]{\assumptionautorefname~\ref*{#1}}}
\expandafter\def\csname @prefixmatch@rem\endcsname#1{\hyperref[#1]{\remarkautorefname~\ref*{#1}}}
\expandafter\def\csname @prefixmatch@conj\endcsname#1{\hyperref[#1]{\conjectureautorefname~\ref*{#1}}}
\expandafter\def\csname @prefixmatch@red\endcsname#1{\hyperref[#1]{\reductionautorefname~\ref*{#1}}}
\expandafter\def\csname @prefixmatch@rsf\endcsname#1{\hyperref[#1]{\rsformulaautorefname~\ref*{#1}}}

% Parse label prefix (part before colon)
\def\@extractprefix#1:#2\@nil{#1}
\def\@extractsuffix#1:#2\@nil{#2}

\renewcommand{\autoref}[1]{%
  % Check if label contains a colon
  \@ifcontainscolon{#1}{%
    % Extract prefix and dispatch
    \edef\@currentprefix{\@extractprefix#1:\@nil}%
    \@checkprefixandref{#1}{\@currentprefix}%
  }{%
    % No colon, use default
    \old@autoref{#1}%
  }%
}

% Check if string contains colon
\def\@ifcontainscolon#1{%
  \@ifcontainscolon@aux#1:\@marker:\@nil
}
\def\@ifcontainscolon@aux#1:#2:#3\@nil{%
  \ifx\@marker#2%
    \expandafter\@secondoftwo
  \else
    \expandafter\@firstoftwo
  \fi
}
\makeatother

% -------------------- Math macros --------------------
\newcommand{\Dx}{D_x}
\newcommand{\Dy}{D_y}
\newcommand{\Tr}{\operatorname{Tr}}
\DeclareMathOperator*{\stat}{stat}
\newcommand{\one}{\mathbf{1}}

% Uniform measures on spheres (short notation)

% Replica integration shorthand

% Missingness / retention parameters
\newcommand{\rhox}{\rho_x}
\newcommand{\rhoy}{\rho_y}
\newcommand{\rhoxy}{\rho}

% Effective spike strength
\newcommand{\theff}{\theta_{\mathrm{eff}}}

% RS entropy helper

% Expectation with auto-sizing brackets
\newcommand{\E}[1]{\mathbb{E}\left[#1\right]}

% --- compact float spacing (template-respecting) ---
\setlength{\textfloatsep}{12pt plus 2pt minus 2pt} % space between floats and surrounding text (default ~20pt)
\setlength{\intextsep}{10pt plus 2pt minus 2pt}    % space around in-text floats (default ~12pt)
\setlength{\abovecaptionskip}{4pt}                 % gap above caption (template default ~7pt)
\captionsetup[subfigure]{skip=2pt}                 % gap between subfigure image and its subcaption

% --- simplify the anonymous title block to a single "Anonymous Authors" line ---
\makeatletter
\renewcommand{\@maketitle}{%
  \vbox{%
    \hsize\textwidth \linewidth\hsize
    \vskip 0.1in
    \@toptitlebar
    \centering
    {\LARGE\bf \@title\par}
    \@bottomtitlebar
    \if@anonymous
      \begin{tabular}[t]{c}\bf\rule{\z@}{24\p@}
        Anonymous Authors \\
      \end{tabular}%
    \else
      \def\And{\end{tabular}\hfil\linebreak[0]\hfil%
        \begin{tabular}[t]{c}\bf\rule{\z@}{24\p@}\ignorespaces}%
      \def\AND{\end{tabular}\hfil\linebreak[4]\hfil%
        \begin{tabular}[t]{c}\bf\rule{\z@}{24\p@}\ignorespaces}%
      \begin{tabular}[t]{c}\bf\rule{\z@}{24\p@}\@author\end{tabular}%
    \fi
    \vskip 0.3in \@minus 0.1in
  }%
}
\makeatother

\title{Missing-Data-Induced Phase Transitions in\\Spectral PLS for Multimodal Learning}

\author{%
\begin{tabular}{c}
Anders Gj{\o}lbye\thanks{Correspondence: \texttt{agjma@dtu.dk}}
\quad Emma Kargaard \quad Ida Kargaard
\\[0.25em]
Lina Skerath \quad Lars Kai Hansen
\\[0.8em]
{\normalfont\small Technical University of Denmark} \\
{\normalfont\small Department of Applied Mathematics and Computer Science} \\
{\normalfont\small Kongens Lyngby, Denmark}
\end{tabular}
}

\begin{document}

\maketitle

\begin{abstract}
  Partial Least Squares (PLS) learns shared structure from paired data via the top singular vectors of the empirical cross-covariance (PLS-SVD), but multimodal datasets often have missing entries in both views. We study PLS-SVD under independent entry-wise missing-completely-at-random masking in a proportional high-dimensional spiked model. After appropriate normalization, the masked cross-covariance behaves like a spiked rectangular random matrix whose effective signal strength is attenuated by $\sqrt{\rho}$, where $\rho$ is the joint entry retention probability. The replica-symmetric analysis predicts a sharp BBP-type phase transition: below a critical signal-to-noise threshold the leading singular vectors are asymptotically uninformative, while above it they achieve nontrivial alignment with the latent shared directions, with closed-form asymptotic overlap formulas.  We also state a finite-rank extension as a conjecture, predicting that the same missingness-adjusted threshold applies componentwise when the latent spikes are separated. Simulations and semi-synthetic multimodal experiments agree with the predicted phase diagram and recovery curves across aspect ratios, signal strengths, and missingness levels.

\end{abstract}

\section{Introduction}
Multimodal learning from paired data is a recurring theme in machine learning: Given two views $(X,Y)$ of the same entities, one aims to extract a shared low-dimensional structure that supports prediction, representation learning, or exploratory analysis.
Canonical correlation analysis (CCA) \citep{hotelling1936cca} and Partial Least Squares (PLS) are classical linear approaches to this problem, differing primarily in whether they optimize correlation or covariance.
PLS in particular remains widely used in high-dimensional applications due to its empirical stability and scalability \citep{rosipal2006pls,boulesteix2007pls}.
A common spectral formulation of PLS computes the leading singular vectors of an empirical cross-covariance matrix (PLS-SVD), placing it within a broader family of singular-vector methods studied via random matrix theory \citep{benaychgeorges2012singular,leger2025pls}.

In practice, multi-view datasets are often not fully observed: Missing entries arise due to acquisition failures, measurement dropout, heterogeneous pipelines, or sensor outages.
While missingness is ubiquitous, its impact on the recovery threshold and overlap behavior of two-view spectral estimators remains poorly understood.
Even under the simplest missing-completely-at-random (MCAR) mechanism \citep{little2002missing}, the cross-covariance structure that drives PLS-SVD is corrupted in a nontrivial way when \emph{both} views are subject to missing data ("masked").
This raises a simple question: \emph{when does PLS-SVD still recover an informative shared component under masking in both $X$ and $Y$?}

We answer this question in a proportional high-dimensional spiked model with dual entry-wise MCAR masking.
Let $N$ denote the number of paired samples, and let $D_x$ and $D_y$ denote the feature dimensions of the two views. We write $\alpha_x = N/D_x$ and $\alpha_y = N/D_y$ for the aspect ratios, and let $m_x$ and $m_y$ be the missing rates in the two views, with joint retention probability $\rho = (1-m_x)(1-m_y)$.
We study the leading left/right singular vectors $(\hat u,\hat v)$ of a properly normalized cross-covariance and measure recovery through the squared overlaps
$R_x^2 = (\hat u^\top u_0)^2$ and $R_y^2 = (\hat v^\top v_0)^2$
with the planted signal directions $(u_0,v_0)$.
In the proportional limit, $R_x^2 \to r_x^2$ and $R_y^2 \to r_y^2$, where $(r_x^2, r_y^2)$ denote the asymptotic limits characterized in \autoref{rsf:main}.

Dual missingness acts on the PLS-SVD phase diagram as a multiplicative attenuation of the cross-view spike: masking induces an effective spike strength $\theta_{\mathrm{eff}}=\sqrt{\rho}\,\theta$, shifting the recoverability boundary by $1/\sqrt{\rho}$.
The resulting model is a spiked rectangular singular-vector problem \citep{benaychgeorges2012singular} with a Baik--Ben Arous--P\'ech\'e (BBP) transition \citep{baik2005bbp}; analogous attenuation appears for principal component analysis under masking \citep{ipsen2019pca_missing_phase}.

Our main result (\autoref{rsf:main}) gives a replica-symmetric prediction of a sharp transition at the critical threshold $\theta_{\mathrm{crit}} = 1/((\alpha_x\alpha_y)^{1/4}\sqrt{\rho})$, with explicit closed-form formulas for the asymptotic overlaps above this threshold. When $\rho=1$, these formulas reduce to the known spiked-rectangular overlap behavior \citep{benaychgeorges2012singular,leger2025pls}; under dual missingness, the required signal strength increases by the factor $1/\sqrt{\rho}$.

Our contributions are: \textbf{(i)} a replica-symmetric prediction of a sharp dual-missingness phase transition for PLS-SVD, with explicit threshold $\theta_{\mathrm{crit}} = 1/((\alpha_x\alpha_y)^{1/4}\sqrt{\rho})$ under entry-wise MCAR masking in both views; \textbf{(ii)} closed-form asymptotic overlap predictions for $(r_x^2, r_y^2)$ above the threshold as functions of $(\alpha_x, \alpha_y, \rho, \theta)$; \textbf{(iii)} a detailed replica-symmetric derivation showing that missingness acts through the effective spike $\theta_{\mathrm{eff}} = \sqrt{\rho}\,\theta$ and recovers the fully observed spiked-rectangular predictions when $\rho=1$; \textbf{(iv)} synthetic and semi-synthetic empirical validation, including TCGA BRCA and PBMC Multiome, confirming the predicted phase boundary and overlap curves across aspect ratios, signal strengths, and missingness rates; and \textbf{(v)} a finite-rank extension (\autoref{conj:rank_k}) conjecturing that the single-spike threshold applies componentwise to higher-rank latent structure with separated spikes, supported by rank-2 and rank-3 experiments.
\section{Related Work}
\paragraph{PLS, CCA, and high-dimensional two-view spectral methods.}
CCA dates back to \citet{hotelling1936cca} and remains a canonical framework for extracting coupled linear structure from paired observations.
PLS is a closely related alternative that maximizes cross-covariance and is widely used in modern high-dimensional applications \citep{rosipal2006pls,boulesteix2007pls}.
Both admit spectral formulations based on cross-covariance matrices, connecting two-view learning to singular-vector behavior in high dimensions \citep{benaychgeorges2012singular}.
Recent work has begun to analyze PLS-SVD in the proportional regime under fully observed spiked models \citep{leger2025pls}.
Our contribution addresses a different regime: we characterize when spectral PLS remains informative under \emph{entry-wise missingness in both views}, yielding an explicit threshold and overlap curves as functions of $(\alpha_x,\alpha_y,\rho)$.

\paragraph{Missing data in multivariate latent-variable models.}
The statistical literature distinguishes MCAR/MAR/MNAR mechanisms and clarifies how missingness can distort estimation \citep{little2002missing,schafer2002missing}.
For single-view structure learning, probabilistic PCA \citep{roweis1998empca,tipping1999ppca} provides a likelihood-based model that naturally accommodates incomplete observations via EM \citep{dempster1977em}.
In the two-view setting, probabilistic formulations of PLS \citep{elb2018ppls} and variants developed for data integration (e.g., multi-omics) provide flexible models for inference and prediction \citep{elb2022po2pls}, while generalized CCA has also been adapted to missing values through algorithmic treatments \citep{vandevelden2012gcca_missing}.
These methods focus on modeling and estimation procedures under missingness; in contrast, we target a sharp high-dimensional \emph{recoverability boundary} for the classical spectral estimator PLS-SVD under dual MCAR masking. Zero-filled and rescaled spectral estimators also appear as initializations in matrix completion; our setting differs in studying cross-covariance singular vectors for paired two-view data and the resulting PLS-SVD recovery threshold.

\paragraph{Phase transitions, spiked matrices, and replica methodology.}
Sharp transitions are a hallmark of spiked random matrix models.
The BBP phenomenon \citep{baik2005bbp} and the singular-vector characterization of low-rank rectangular perturbations \citep{benaychgeorges2012singular} provide the natural mathematical baseline for PLS-SVD once the cross-covariance is cast in spiked form.
Replica methods have long been used to analyze thresholds and overlaps for high-dimensional spectral inference \citep{biehl1993unsupervised,hoyle2007multiple} and connect to algorithmic perspectives such as message passing \citep{bayati2011amp}.
More recently, \citet{ipsen2019pca_missing_phase} used replica techniques to derive a phase transition for PCA with missing data and emphasized an effective signal-to-noise reduction viewpoint.
Our work extends this line to two-view learning by giving a replica-symmetric derivation for PLS-SVD under dual MCAR masking, recovering the fully observed spiked-rectangular formulas as a special case and isolating the missingness effect through the multiplicative attenuation $\theta_{\mathrm{eff}}=\sqrt{\rho}\theta$.

\section{Replica Analysis of PLS with Missing Data}
\label{sec:pls}
\subsection{Problem Setup}
We study PLS-SVD in a two-view model where both views may contain missing entries.
Let $X_\star\in\mathbb{R}^{N\times \Dx}$ denote the complete, unmasked $X$-view. We assume that this complete design has been whitened, so that
\begin{equation}
  X_\star^\top X_\star = N I_{\Dx}.
  \label{eq:whiten}
\end{equation}
Fix unit vectors $u_0\in\mathbb{R}^{\Dx}$ and $v_0\in\mathbb{R}^{\Dy}$. The complete, unmasked $Y$-view is generated as
\begin{equation}
  Y_\star = \theta (X_\star u_0)v_0^\top + Z,
  \quad
  Z_{ij}\overset{\text{i.i.d.}}{\sim}\mathcal{N}(0,1),
  \quad
  Z\perp X_\star,
  \label{eq:model}
\end{equation}
with $\theta\ge 0$.
We treat i.i.d.\ Gaussian noise as a working simplification. Structured-noise robustness is evaluated in \autoref{app:correlated_noise}.
We introduce independent missing-completely-at-random (MCAR) masks $S_x\in\{0,1\}^{N\times\Dx}$ and $S_y\in\{0,1\}^{N\times\Dy}$ with i.i.d.\ entries $(S_x)_{ij}\sim \mathrm{Bernoulli}(1-m_x)$ and $(S_y)_{ij}\sim \mathrm{Bernoulli}(1-m_y)$, independent of $(X_\star,Y_\star,Z)$.
Define the retention probabilities $\rhox := 1-m_x$, $\rhoy := 1-m_y$, and $\rhoxy := \rhox\rhoy$.
We observe missing-as-zero matrices
\begin{equation}
X := X_{\mathrm{obs}} = S_x\odot X_\star,
\qquad
Y := Y_{\mathrm{obs}} = S_y\odot Y_\star.
\label{eq:obsXY}
\end{equation}
The PLS-SVD algorithm computes the top singular vectors of the empirical cross-covariance $\widehat{\Sigma}_{XY} := N^{-1}X^\top Y \in \mathbb{R}^{\Dx\times\Dy}$.
Since singular vectors are invariant to nonzero scalar rescaling, we analyze the rescaled cross-covariance
\begin{equation}
  C := \frac{1}{\sqrt{\rhoxy}}\widehat{\Sigma}_{XY}
  \;=\; \frac{1}{N\sqrt{\rhoxy}}X^\top Y.
  \label{eq:crosscov}
\end{equation}
PLS-SVD outputs
\begin{equation}
  (\hat u,\hat v)\in\arg\max_{\|u\|=\|v\|=1} u^\top C v.
  \label{eq:plssvd}
\end{equation}
We measure recovery performance via squared overlaps
\begin{equation}
  R_x^2 := (\hat u^\top u_0)^2,
  \qquad
  R_y^2 := (\hat v^\top v_0)^2.
  \label{eq:overlaps}
\end{equation}
We work in the proportional limit with fixed aspect ratios $\alpha_x := N/\Dx \ge 1$ (the whitening assumption $X_\star^\top X_\star = N I_{\Dx}$ requires $\Dx \le N$) and $\alpha_y := N/\Dy \in (0, \infty)$, as $N, \Dx, \Dy \to \infty$.

\subsection{Reduction to the Spiked Model}
\label{sec:spiked_reduction}
Under dual MCAR masking, the observed cross-covariance reduces to a spiked rectangular Gaussian model with effective spike strength $\theff=\sqrt{\rhoxy}\,\theta$.
\begin{reduction}[Spiked Form under Dual Masking]
\label{red:spiked}
Under \eqref{eq:whiten}--\eqref{eq:obsXY}, define $C$ by \eqref{eq:crosscov}.
Then in the proportional limit,
\begin{equation}
  C = \theff\,u_0v_0^\top + \frac{1}{\sqrt{N}}W + E_N,
  \qquad
  \theff := \sqrt{\rhoxy}\,\theta,
  \label{eq:Cspike}
\end{equation}
where $W\in\mathbb{R}^{\Dx\times\Dy}$ has asymptotically i.i.d.\ $\mathcal{N}(0,1)$ entries. Under the delocalized-design condition (\autoref{rem:deloc}), we treat $E_N$ as asymptotically negligible for the leading outlier singular-vector behavior, and analyze the resulting rectangular spiked model.
\end{reduction}
\begin{proof}
Expand \eqref{eq:crosscov} using \eqref{eq:obsXY} and \eqref{eq:model}:
\[
C
= \frac{1}{N\sqrt{\rhoxy}}(S_x\odot X_\star)^\top\Big(S_y\odot\big(\theta(X_\star u_0)v_0^\top+Z\big)\Big).
\]
The signal contribution has entrywise expectation
\[
\E{C_{ij}}
= \frac{\theta}{N\sqrt{\rhoxy}}
  \sum_{k=1}^{N}
  \E{(S_x)_{ki}(S_y)_{kj}}\,(X_\star)_{ki}\,(X_\star u_0)_k\,(v_0)_j .
\]
Independent MCAR masks give $\E{(S_x)_{ki}(S_y)_{kj}}=\rhox\rhoy=\rhoxy$, and combining with $X_\star^\top X_\star = N\,I_{\Dx}$ yields
\[
\E{C_{ij}}=\sqrt{\rhoxy}\,\theta\,(u_0)_i\,(v_0)_j .
\]
For the noise, each entry of $C$ is a sample-wise sum of independent centered terms from $X^\top(S_y\odot Z)$. Under a delocalized design and independent MCAR masks, a Lindeberg-type central limit theorem gives asymptotically Gaussian entries, with the $(N\sqrt{\rhoxy})^{-1}$ normalization fixing the variance scale.
Residual fluctuations from random masking of the signal are collected in $E_N$, yielding the approximation in \eqref{eq:Cspike}.
\end{proof}

\begin{remark}
\label{rem:deloc}
\autoref{red:spiked} should be interpreted under a delocalized complete-design condition: no small set of rows or coordinates of $X_\star$ should dominate the cross-covariance after masking. A sufficient informal condition is that the row and coordinate leverage of $X_\star$ remain uniformly controlled as $N$, $D_x$, and $D_y$ grow. Under such conditions, masking fluctuations are asymptotically isotropic and are absorbed into the Gaussian noise term in \eqref{eq:Cspike}. If $X_\star$ has highly concentrated leverage, masking may introduce anisotropic fluctuations that are not captured by the present reduction.
\end{remark}

Under \autoref{red:spiked}, dual missingness reduces the effective signal-to-noise ratio by a factor of $\sqrt{\rhoxy}$, which shifts the phase boundary accordingly.

\subsection{Main Result: Phase Transition under Dual Masking}
\label{sec:main_result}

\begin{rsformula}[PLS-SVD under Dual MCAR Masking]
\label{rsf:main}
Under the spiked rectangular reduction in \autoref{red:spiked}, the replica-symmetric prediction for the asymptotic squared overlaps is as follows. Consider the spiked two-view model with whitened design $X_\star^\top X_\star = NI_{D_x}$, response $Y_\star = \theta(X_\star u_0)v_0^\top + Z$ with i.i.d.\ Gaussian noise, and independent MCAR masks with retention probabilities $\rho_x = 1-m_x$, $\rho_y = 1-m_y$.
Let $\rho = \rho_x\rho_y$ denote the joint retention probability.
As $N, D_x, D_y \to \infty$ with fixed aspect ratios $\alpha_x = N/D_x \ge 1$ and $\alpha_y = N/D_y \in (0, \infty)$, the replica-symmetric analysis predicts the following asymptotic limits for the squared overlaps of the leading PLS-SVD singular vectors with the planted directions:
\begin{align}
r_x^2 &=
\begin{cases}
0, & \alpha_x\alpha_y\,\rho^2\theta^4 \le 1,\\[0.1em]
\dfrac{\alpha_x\alpha_y\rho^2\theta^4-1}{\alpha_y\rho\theta^2\,(\alpha_x\rho\theta^2+1)}, & \alpha_x\alpha_y\,\rho^2\theta^4 > 1,
\end{cases}
\label{eq:rx_overlap}
\\[0.1em]
r_y^2 &=
\begin{cases}
0, & \alpha_x\alpha_y\,\rho^2\theta^4 \le 1,\\[0.1em]
\dfrac{\alpha_x\alpha_y\rho^2\theta^4-1}{\alpha_x\rho\theta^2\,(\alpha_y\rho\theta^2+1)}, & \alpha_x\alpha_y\,\rho^2\theta^4 > 1.
\end{cases}
\label{eq:main_overlaps}
\end{align}
The critical threshold is
\begin{equation}
\theta_{\mathrm{crit}} = \frac{1}{(\alpha_x\alpha_y)^{1/4}\sqrt{\rho}}.
\label{eq:theta_crit}
\end{equation}
\end{rsformula}

The replica-symmetric prediction therefore gives a sharp transition at $\theta_{\mathrm{crit}}$: below this value the leading singular vectors have zero asymptotic overlap with the planted directions, while above it they have the nonzero overlaps in \eqref{eq:rx_overlap} and \eqref{eq:main_overlaps}.
We next state the corresponding finite-rank extension as a conjecture.

\begin{conjecture}[Rank-$k$ Extension]
\label{conj:rank_k}
Consider the rank-$k$ extension of \eqref{eq:model},
\[
Y_\star = \sum_{i=1}^k \theta_i\, (X_\star u_{0,i})\, v_{0,i}^\top + Z,
\]
with fixed $k$, orthonormal signal pairs $\{(u_{0,i}, v_{0,i})\}_{i=1}^k$, and separated spike strengths $\theta_1 > \cdots > \theta_k > 0$, under the same dual MCAR masking as in \autoref{rsf:main}. In the proportional limit, the empirical singular component associated with the $i$-th outlying singular value is recoverable if and only if $\theta_i > \theta_{\mathrm{crit}}$. Above threshold, its squared overlaps are given by \eqref{eq:rx_overlap} for the $X$-view and \eqref{eq:main_overlaps} for the $Y$-view after substituting $\theta$ with $\theta_i$, equivalently $\theta_{\mathrm{eff},i}=\sqrt{\rho}\,\theta_i$. If two or more spike strengths are equal or asymptotically non-separated, the corresponding statement should be interpreted at the level of the signal subspace rather than individual singular vectors.
\end{conjecture}

The conjecture is motivated by the rank-independence of the masking reduction in \autoref{red:spiked} and by the componentwise BBP behavior of finite-rank rectangular spiked matrices with separated spikes. \autoref{app:rank_k} reports rank-2 and rank-3 experiments supporting this prediction.

The remainder of this section analyzes this result via replica analysis.

\begin{figure}[!t]
\centering
\includegraphics[width=0.95\textwidth]{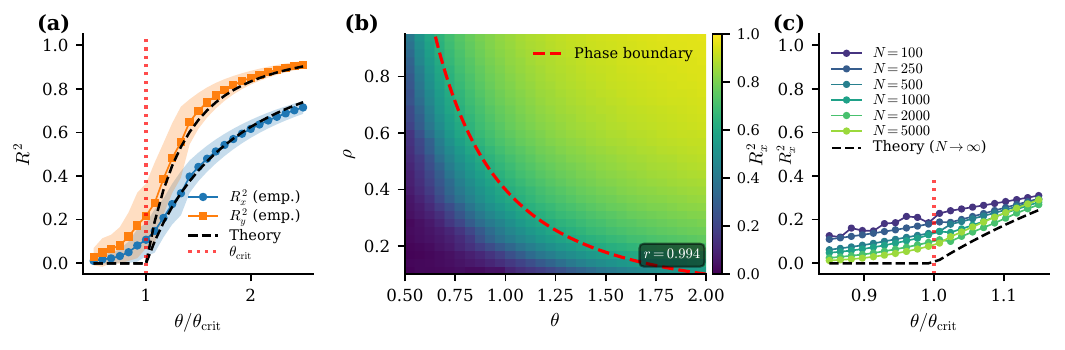}
\caption{%
\textbf{PLS-SVD phase transition under dual missingness.}
\textbf{(a)} Empirical $R_x^2$, $R_y^2$ (markers) match the RS prediction (dashed) from \autoref{rsf:main}, with the transition at $\theta_{\mathrm{crit}} = 1/[(\alpha_x\alpha_y)^{1/4}\sqrt{\rho}]$ (red).
\textbf{(b)} $(\theta, \rho)$ phase diagram for $R_x^2$; theoretical boundary in red ($r = 0.994$).
\textbf{(c)} Finite-size sharpening as $N$ grows from 100 to 5000.
Parameters: (a) $N=1000$, $D_x=200$, $D_y=50$, $m_x=0.3$, $m_y=0.4$, 100 trials; (b) $N=1000$, $D_x=150$, $D_y=120$, $30\times 30$ grid, 30 trials; (c) $\alpha_x=\alpha_y=2.5$, $m_x=m_y=0.2$, 30 trials.
}
\label{fig:theory_validation}
\end{figure}

\subsection{Replica Analysis}
\label{sec:proof_sketch}
We now derive the overlap formulas in \autoref{rsf:main}. The purpose of the replica calculation is to reduce the high-dimensional singular-vector problem to a scalar optimization over the overlaps with the planted directions. Let $r_u$ and $r_v$ denote the signed overlaps of candidate singular vectors with $u_0$ and $v_0$. The calculation introduces an inverse temperature $\beta$, source fields $h_x$ and $h_y$ to select the positive-overlap branch, and a replica index $w$. The full derivation is given in \autoref{app:replica_outline}; the main text keeps only the steps needed to obtain the threshold and the closed-form overlaps. Under \autoref{red:spiked}, it suffices to analyze the spiked rectangular model $C = \theff\,u_0v_0^\top + N^{-1/2}W$ with $W_{ij}\stackrel{\mathrm{iid}}{\sim}\mathcal{N}(0,1)$.

For replicas $\{u^a, v^a\}_{a=1}^w$, define the Gram matrices and magnetization vectors
\begin{equation}
(Q_u)_{ab} := (u^a)^\top u^b, \quad (Q_v)_{ab} := (v^a)^\top v^b, \quad (r_u)_a := (u^a)^\top u_0, \quad (r_v)_a := (v^a)^\top v_0.
\end{equation}
After the Gaussian disorder average and saddle-point reduction, the replicated action depends on the replicas only through these quantities.

\paragraph{Replica Symmetry and the $w\to0$ Evaluation.}
\label{sec:rs_main}
Under the replica-symmetric (RS) ansatz \citep{mezard1987spin,nishimori2001statistical} we set $(Q_u)_{aa} = (Q_v)_{aa} = 1$, $(Q_u)_{ab} = q_u$ and $(Q_v)_{ab} = q_v$ for $a\neq b$, and $(r_u)_a = r_u$, $(r_v)_a = r_v$.
Taking the $w\to0$ limit (see \autoref{app:w0_limit} for detailed derivation), the determinant term becomes
\begin{equation}
\label{eq:det_w0_main}
\lim_{w\to0}\frac{1}{w}\log\det(Q_u-r_ur_u^\top)
=
\log(1-q_u)+\frac{q_u-r_u^2}{1-q_u},
\end{equation}
and analogously for $(q_v,r_v)$.
The trace term simplifies to
\begin{equation}
\label{eq:tr_w0_main}
\lim_{w\to0}\frac{1}{w}\Tr(Q_uQ_v^\top)
=
1-q_uq_v.
\end{equation}
\paragraph{RS free energy.}
Define the shorthand
\begin{equation}
\label{eq:phi_def_main}
\phi(q,r) := \log(1-q)+\frac{q-r^2}{1-q}.
\end{equation}
Then the RS free-energy density becomes
\begin{equation}
\label{eq:Phi_main}
\begin{aligned}
\Phi_\beta(q_u,q_v,r_u,r_v;h_x,h_y) &= \frac{1}{2\alpha_x}\phi(q_u,r_u) + \frac{1}{2\alpha_y}\phi(q_v,r_v) + \frac{\beta^2}{2}(1-q_uq_v) \\
&\quad + \beta\theff\,r_ur_v + h_x\,r_u + h_y\,r_v.
\end{aligned}
\end{equation}
\paragraph{Zero Temperature and Reduction to Two Overlaps.}
\label{sec:zeroT_main}
As $\beta\to\infty$, $q_u,q_v\to 1$.
Introduce susceptibilities $\chi_u:=\beta(1-q_u)$ and $\chi_v:=\beta(1-q_v)$ (see \autoref{app:zero_temp} for detailed scaling analysis).
Consider the rescaled objective $\Psi := \lim_{\beta\to\infty}\beta^{-1}\Phi_\beta$.
This yields
\begin{equation}
\label{eq:Psi_main_chi}
\Psi(r_u,r_v,\chi_u,\chi_v) = \frac{1-r_u^2}{2\alpha_x\chi_u} + \frac{1-r_v^2}{2\alpha_y\chi_v} + \frac{\chi_u+\chi_v}{2} + \theff\,r_ur_v.
\end{equation}
Optimizing over $\chi_u,\chi_v$ (see \autoref{app:suscept_opt}) gives
$\chi_u^\star=\sqrt{(1-r_u^2)/\alpha_x}$ and
$\chi_v^\star=\sqrt{(1-r_v^2)/\alpha_y}$.
Substituting back yields the reduced two-parameter objective
\begin{equation}
\label{eq:Psi_main}
\Psi(r_u,r_v) = \frac{\sqrt{1-r_u^2}}{\sqrt{\alpha_x}} + \frac{\sqrt{1-r_v^2}}{\sqrt{\alpha_y}} + \theff\,r_ur_v, \qquad r_u,r_v\in[0,1].
\end{equation}
\paragraph{Stationarity, Threshold, and Closed-Form Overlaps.}
\label{sec:solve_main}
Stationarity implies
\begin{equation}
\label{eq:stat_main}
\theff\,r_v = \frac{1}{\sqrt{\alpha_x}}\frac{r_u}{\sqrt{1-r_u^2}}, \quad \theff\,r_u = \frac{1}{\sqrt{\alpha_y}}\frac{r_v}{\sqrt{1-r_v^2}}.
\end{equation}
Squaring yields
\begin{equation}
\label{eq:ratio_main}
\frac{r_u^2}{1-r_u^2} = \alpha_x\theff^2\, r_v^2, \quad \frac{r_v^2}{1-r_v^2} = \alpha_y\theff^2\, r_u^2.
\end{equation}
Multiplying (for $r_u,r_v>0$) gives
\begin{equation}
\label{eq:key_main}
(1-r_u^2)(1-r_v^2)=\frac{1}{\alpha_x\alpha_y\theff^4}.
\end{equation}
Hence a nontrivial solution exists iff $\alpha_x\alpha_y\theff^4>1$,
equivalently $\alpha_x\alpha_y\,\rhoxy^2\,\theta^4>1$, yielding \eqref{eq:theta_crit}.
Solving in the supercritical regime gives
\begin{equation}
\label{eq:rurv_main}
\begin{aligned}
r_u^2
&=\frac{\alpha_x\alpha_y\theff^4-1}{\alpha_x\alpha_y\theff^4+\alpha_y\theff^2}
=\frac{\alpha_x\alpha_y\rhoxy^2\theta^4-1}{\alpha_y\rhoxy\theta^2(\alpha_x\rhoxy\theta^2+1)},
\\[2pt]
r_v^2
&=\frac{\alpha_x\alpha_y\theff^4-1}{\alpha_x\alpha_y\theff^4+\alpha_x\theff^2}
=\frac{\alpha_x\alpha_y\rhoxy^2\theta^4-1}{\alpha_x\rhoxy\theta^2(\alpha_y\rhoxy\theta^2+1)},
\end{aligned}
\end{equation}
which finalizes the results of \autoref{rsf:main}. \qed

\begin{figure}[!b]
\centering
\includegraphics[width=0.95\textwidth]{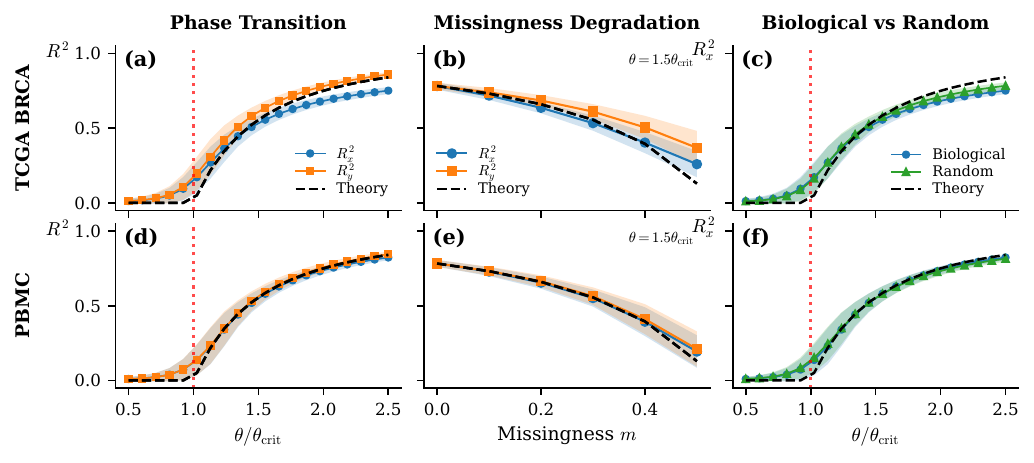}
\caption{%
\textbf{Semi-synthetic validation with biological signal structure.}
Rows: TCGA BRCA ($N=873$, top); PBMC Multiome ($N=5000$, bottom).
\textbf{(a, d)} Empirical overlaps (markers) match theory (dashed) at $\theta/\theta_{\mathrm{crit}} = 1$ (red).
\textbf{(b, e)} Recovery vs.\ missingness at $\theta = 1.5\theta_{\mathrm{crit}}$.
\textbf{(c, f)} Biological (blue) vs.\ random Gaussian (green) signal directions give identical transitions.
Theory-empirical $r > 0.99$ on both datasets, 500 trials per configuration.
}
\label{fig:semi_synthetic}
\end{figure}

\section{Experimental Validation}
\label{sec:experiments}

We validate \autoref{rsf:main} through Monte Carlo simulations across multiple experimental designs, followed by semi-synthetic experiments using real biological data.
Code to reproduce all experiments is publicly available.\footnote{\href{https://github.com/gjoelbye/Phase-Transitions-in-Spectral-PLS}{github.com/gjoelbye/Phase-Transitions-in-Spectral-PLS}}

\subsection{Simulation Protocol}
\label{sec:protocol}

For each configuration, we generate a whitened design $X_\star$ via QR decomposition ensuring $X_\star^\top X_\star = NI_{D_x}$, construct the response $Y_\star = \theta(X_\star u_0)v_0^\top + Z$ with i.i.d.\ Gaussian noise $Z_{ij}\sim\mathcal{N}(0,1)$, and apply independent MCAR masks $S_x$, $S_y$ with specified retention probabilities.
We extract the leading singular vectors $(\hat u, \hat v)$ from the rescaled masked cross-covariance $C = (N\sqrt{\rho})^{-1}X^\top Y$ and compute empirical squared overlaps $R_x^2 = (\hat u^\top u_0)^2$ and $R_y^2 = (\hat v^\top v_0)^2$.

\subsection{Synthetic Experiments}
\label{sec:synthetic}

We design four synthetic experiments targeting different aspects of the phase transition theory.

\paragraph{Experiment 1: Phase Transition Validation.}
We fix $N=1000$, $D_x=200$, $D_y=50$ (aspect ratios $\alpha_x=5$, $\alpha_y=20$), and missingness rates $m_x=0.3$, $m_y=0.4$, yielding joint retention $\rho=0.42$ and critical threshold $\theta_{\mathrm{crit}}\approx 0.49$.
We sweep $\theta$ from $0.5\theta_{\mathrm{crit}}$ to $2.5\theta_{\mathrm{crit}}$ across 25 values with 100 independent trials each, measuring both $R_x^2$ and $R_y^2$.

\paragraph{Experiment 2: Phase Diagram.}
To validate the full phase structure, we construct a $30\times 30$ grid over $\theta\in[0.5,2.0]$ and $\rho\in[0.1,0.95]$ with $N=1000$, $D_x=150$, $D_y=120$, running 30 trials per grid point.
We compare the empirical $R_x^2$ heatmap against the theoretical phase boundary $\theta_{\mathrm{crit}}(\rho) = 1/[(\alpha_x\alpha_y)^{1/4}\sqrt{\rho}]$.

\paragraph{Experiment 3: Finite-Size Effects.}
We examine convergence to the asymptotic theory by varying sample size $N\in\{100, 250, 500, 1000, 2000, 5000\}$ with fixed aspect ratios $\alpha_x=\alpha_y=2.5$ and symmetric missingness $m_x=m_y=0.2$ ($\rho=0.64$).
For each $N$, we sweep $\theta$ in a narrow window around $\theta_{\mathrm{crit}}$ ($\pm 15\%$) with 30 trials per point, observing how the transition sharpens with increasing dimensionality.

\paragraph{Experiment 4: Missingness Comparison.}
We compare single-view versus joint missingness using $N=800$, $D_x=D_y=200$ ($\alpha_x=\alpha_y=4$).
For each condition, we construct a $50\times 50$ grid over $\theta\in[0.3, 2.0]$ and $m\in[0.0, 0.9]$ with 30 trials per point.
In the single-view condition, only $X$ is masked ($m_y=0$, $\rho=1-m$); in the joint condition, both views are masked equally ($m_x=m_y=m$, $\rho=(1-m)^2$).

\subsection{Semi-Synthetic Experiments}
\label{sec:semi_synthetic}

The synthetic simulations validate \autoref{rsf:main} under idealized conditions where the signal directions $(u_0, v_0)$ are independent random Gaussian vectors.
However, biological signals often exhibit structured, non-random geometry arising from underlying biological processes.
We test whether the phase transition theory holds when signal directions are extracted from real multi-view biological data.

\paragraph{Semi-Synthetic Protocol.}
Given a real multi-view dataset $(X_{\mathrm{real}}, Y_{\mathrm{real}})$, we construct semi-synthetic data in four steps:
(i) preprocess each view by standardizing, PCA-reducing to $D_x = D_y = 200$, and whitening to $X_w^\top X_w = NI_{D_x}$;
(ii) extract empirical signal directions $(u_{\mathrm{bio}}, v_{\mathrm{bio}})$ as the leading PLS-SVD singular vectors of the whitened data, so that $(u_{\mathrm{bio}}, v_{\mathrm{bio}})$ live in the PCA-whitened coordinate system rather than the original molecular feature space, and overlaps are reported in this whitened basis throughout;
(iii) generate $Y_\star = \theta(X_w u_{\mathrm{bio}})v_{\mathrm{bio}}^\top + Z$ with $Z_{ij} \sim \mathcal{N}(0,1)$ and known $\theta$;
(iv) apply MCAR masks with $m_x = m_y = m$.
This preserves real signal geometry while providing ground-truth directions for computing overlaps.

\paragraph{Datasets.}
We use TCGA BRCA \citep{tcga2012brca} ($N=873$, RNA-seq vs.\ methylation) and PBMC Multiome \citep{tenxgenomics2021pbmcmultiome} ($N=5000$, scRNA-seq vs.\ scATAC-seq), both preprocessed to $D_x = D_y = 200$ via PCA on RNA and latent semantic indexing on ATAC peaks.

\begin{table}[t]
\centering
\caption{Datasets used in semi-synthetic experiments. Additional preprocessing details are provided in \autoref{app:aspect_ratios}.}
\label{tab:datasets}
\small
\begin{tabular}{l l c l c c}
\toprule
Dataset & Views & $N$ & Preprocessing & $D_x$ & $D_y$ \\
\midrule
TCGA BRCA & RNA-seq / methylation & 873 & Standardize, PCA, whiten & 200 & 200 \\
PBMC Multiome & scRNA-seq / scATAC-seq & 5000 & Standardize, LSI/PCA, whiten & 200 & 200 \\
\bottomrule
\end{tabular}
\end{table}

\paragraph{Experiment 5: Semi-Synthetic Validation.}
For each dataset, we conduct three experiments with 500 independent trials per configuration: \textit{(i) Phase transition validation:}
Sweep $\theta$ from $0.5\theta_{\mathrm{crit}}$ to $2.5\theta_{\mathrm{crit}}$ (20 points) at fixed missingness $m=0.3$, comparing empirical overlaps to theoretical predictions. \textit{(ii) Missingness degradation:}
Fix $\theta = 1.5\theta_{\mathrm{crit}}$ (supercritical regime) and sweep $m \in \{0, 0.1, 0.2, 0.3, 0.4, 0.5\}$, measuring how recovery degrades with increasing missingness. \textit{(iii) Biological vs.\ random directions:}
Compare phase transitions using biological directions $(u_{\mathrm{bio}}, v_{\mathrm{bio}})$ versus random Gaussian directions $(u_{\mathrm{rand}}, v_{\mathrm{rand}})$ to test universality.

\paragraph{Experiment 6: Practical Diagnostics.}
We investigate whether the phase transition is observable without access to ground-truth signal directions.
Using $N=2000$, $\alpha_x=\alpha_y=7.5$ (i.e., $D_x=D_y=266$), and $m_x=m_y=0.1$, we sweep $\theta$ from $0.5\theta_{\mathrm{crit}}$ to $2.5\theta_{\mathrm{crit}}$ across 60 values with 25 trials each.
For each trial, we compute split-half stability: randomly partition samples into two halves, run PLS-SVD on each, and measure the correlation between the resulting singular vectors.
This diagnostic is computable in practice without knowing $(u_0, v_0)$.

\autoref{app:robustness} contains the additional diagnostics, robustness checks, imputation baselines, finite-rank experiments, correlated-noise experiments, and preprocessing sensitivity analyses.

\section{Results}
\label{sec:results}

\subsection{Synthetic Validation}

\autoref{fig:theory_validation} presents the core synthetic validation of \autoref{rsf:main}.
Panel~(a) demonstrates the phase transition: empirical overlaps $R_x^2$ and $R_y^2$ closely track theoretical predictions, with the sharp transition occurring precisely at $\theta/\theta_{\mathrm{crit}} = 1$ (red vertical line).
The asymmetric overlaps $R_y^2 > R_x^2$ reflect the asymmetric aspect ratios ($\alpha_x=5$ vs.\ $\alpha_y=20$), as predicted by Equations~\eqref{eq:rx_overlap} and~\eqref{eq:main_overlaps}.

Panel~(b) validates the full phase structure across the $(\theta, \rho)$ parameter space.
The theoretical phase boundary $\theta_{\mathrm{crit}}(\rho) = 1/[(\alpha_x\alpha_y)^{1/4}\sqrt{\rho}]$ separates the subcritical regime ($R_x^2\approx 0$) from the supercritical regime ($R_x^2>0$).
The correlation between predicted and observed overlaps is $r=0.994$, confirming quantitative accuracy across the entire parameter space. Panel~(c) examines finite-size effects.
As $N$ increases from 100 to 5000, the empirical transition sharpens from a smooth function toward the predicted step function, confirming that \autoref{rsf:main} predicts the limiting behavior and that finite-sample corrections diminish with increasing dimensionality.

\subsection{Missingness Effects}

Single-view masking ($\rho = 1-m$) raises the phase boundary linearly in the missing rate, whereas joint masking ($\rho = (1-m)^2$) produces a steeper boundary: at $m=0.5$, single-view yields $\rho=0.5$ while joint yields $\rho=0.25$, doubling the required signal strength. This burden is most relevant in multi-omics settings where both modalities suffer dropout. Detailed phase diagrams are in \autoref{fig:missingness_comparison} (\autoref{app:diagnostics}).

\subsection{Semi-Synthetic Validation}

\autoref{fig:semi_synthetic} presents results for both biological datasets.
Panels~(a) and~(d) show phase transition curves: empirical overlaps $R_x^2$ and $R_y^2$ closely track theoretical predictions, with the transition occurring at $\theta/\theta_{\mathrm{crit}} = 1$.
The theory-empirical correlation exceeds $r > 0.99$ for both datasets, despite the signal directions being extracted from real biological covariance structure rather than random Gaussian vectors.

Panels~(b) and~(e) display missingness degradation: at fixed supercritical signal strength ($\theta = 1.5\theta_{\mathrm{crit}}$), recovery quality decreases smoothly as missingness increases from $m=0$ to $m=0.5$, following the theoretical prediction that higher $m$ raises the effective threshold via $\rho = (1-m)^2$. Panels~(c) and~(f) compare biological versus random signal directions.
Both curves overlap and follow the same theoretical prediction.
This confirms that the phase transition is quite robust to the specific geometry of $(u_0, v_0)$. Only the signal-to-noise ratio $\theta/\theta_{\mathrm{crit}}$ matters.

\subsection{Practical Diagnostics}

Split-half stability, defined as the correlation between PLS-SVD singular vectors computed on two random halves of the samples, provides a computable indicator of recovery that does not require ground-truth signal directions. Because each half uses $N/2$ samples, the effective threshold is $\sqrt{2}\,\theta_{\mathrm{crit}}$. Empirically, stability tracks $R_x^2$, so the diagnostic can help identify whether recovered components are reliable at the observed missingness level. See \autoref{fig:practical_diagnostics} in \autoref{app:diagnostics}.

\section{Conclusion} \label{sec:conclusion}
We characterized when spectral Partial Least Squares (PLS-SVD) remains informative for paired multimodal data when both views contain entry-wise MCAR missingness. The main effect of dual masking is a multiplicative attenuation of the cross-view spike, $\theta_{\mathrm{eff}}=\sqrt{\rho}\,\theta$, which brings missing-data PLS into the regime of a spiked rectangular random-matrix model with a BBP-style transition. The replica-symmetric analysis predicts the critical threshold $\theta_{\mathrm{crit}}=1/\!\big((\alpha_x\alpha_y)^{1/4}\sqrt{\rho}\big)$: below this value, the leading singular vectors have zero asymptotic alignment with the planted directions; above it, their overlaps are nonzero and given in closed form. Synthetic experiments confirm the predicted phase boundary, overlap curves, and finite-size sharpening, and semi-synthetic experiments on TCGA BRCA and PBMC Multiome show the same transition when the signal directions have biological structure.

\paragraph{Practical takeaways.}
Missingness imposes a direct signal-strength penalty: under the model, PLS-SVD requires a factor $1/\sqrt{\rho}$ more signal, where $\rho$ is the joint retention probability. Thus 30\% dropout per view gives $\rho=0.49$ and a $1.43\times$ penalty, while 50\% dropout gives $\rho=0.25$ and a $2\times$ penalty. Because the true signal strength is rarely known, this threshold is best paired with split-half stability and interpreted as model-based calibration, especially in biomedical settings where missingness may be informative.

\paragraph{Limitations.}
The analysis assumes MCAR masking, a rank-1 latent signal, i.i.d.\ Gaussian noise in $Y_\star$, and a whitened complete design satisfying $X_\star^\top X_\star=NI_{D_x}$. Experiments suggest that the threshold remains informative under moderate MAR, but overlaps can degrade with the missingness mechanism, and MNAR missingness creates bias that missing-as-zero encoding cannot remove. The finite-rank extension is conjectural and empirically supported only for separated spikes. Non-Gaussian, heteroscedastic, and structured-noise settings can remain close to theory after suitable preprocessing or whitening, but infinite-kurtosis noise and residual whitening error fall outside the derivation. Count-valued modalities require variance-stabilizing preprocessing, such as rank normalization or sqrt-CPM. Finally, the argument does not transfer directly to CCA or generalized CCA, where masking perturbs both cross-covariance estimation and the inverse within-view covariance geometry.

% Acknowledgments are automatically hidden in the anonymized submission.
\section*{Acknowledgments}
This work was supported by the Novo Nordisk Foundation grant NNF22OC0076907 ”Cognitive spaces - Next generation explainability” and the Pioneer Centre for AI, DNRF grant number P1. It was also supported by the Danish Data Science Academy, which is funded by the Novo Nordisk Foundation (NNF21SA0069429) and VILLUM FONDEN (40516).

\bibliographystyle{plainnat}
\bibliography{references}

\newpage
\appendix
% ============================================================
\section{Technical Details}
\label{app:tech}

This appendix provides technical details for the replica derivation:
(i) the dual-masking reduction that yields the effective spike $\theff=\sqrt{\rhoxy}\,\theta$,
(ii) the Gaussian disorder average identity,
(iii) the Gram-matrix Jacobian/entropy factors,
(iv) a stationary-point differentiation identity used to extract overlaps from the RS free energy,
(v) detailed $w\to0$ limit calculations,
(vi) zero-temperature limit analysis, and
(vii) susceptibility optimization.

\paragraph{Remark on theoretical rigor.}
The expressions in \autoref{rsf:main} are obtained via a replica-symmetric (RS) calculation, which provides a closed-form prediction for both the critical threshold and the asymptotic overlaps. While RS arguments are not universally rigorous, here they match the standard BBP-type behavior expected for spiked random-matrix problems and yield consistent limits in relevant special cases. Crucially, the paper does not rely on these formulas as unchecked claims: \autoref{sec:experiments} validates the predictions with extensive simulations over broad parameter sweeps, showing tight quantitative agreement between theory and empirical overlaps. In this sense, \autoref{rsf:main} should be read as a precise, testable characterization with strong empirical support, and a useful guide for when spectral PLS is predicted to succeed or fail under the modeled assumptions.

\subsection{Dual MCAR masking and the effective spike strength}
\label{app:dual_masking}

We provide additional justification for \autoref{red:spiked}.
Recall the latent complete model
\[
X_\star^\top X_\star = N I_{\Dx},
\qquad
Y_\star = \theta (X_\star u_0)v_0^\top + Z,\quad Z_{ij}\sim\mathcal{N}(0,1),
\]
and the observed (missing-as-zero) matrices
\[
X = S_x\odot X_\star,
\qquad
Y = S_y\odot Y_\star,
\]
with independent MCAR masks having retention probabilities
\[
\rhox = 1-m_x,\qquad \rhoy = 1-m_y,\qquad \rho=\rhox\rhoy.
\]

The observed cross-covariance is
\[
\widehat{\Sigma}_{XY}=\frac{1}{N}X^\top Y,
\qquad
C=\frac{1}{\sqrt{\rho}}\widehat{\Sigma}_{XY}.
\]
Expanding yields a signal part and a noise part:
\[
C
=\frac{\theta}{N\sqrt{\rho}}\,(S_x\odot X_\star)^\top\!\big(S_y\odot (X_\star u_0)v_0^\top\big)
+\frac{1}{N\sqrt{\rho}}\,(S_x\odot X_\star)^\top\!\big(S_y\odot Z\big).
\]

\paragraph{Mean signal.}
Working entrywise,
\[
\E{C_{ij}}
=\frac{\theta}{N\sqrt{\rho}}
\sum_{k=1}^{N}
\E{(S_x)_{ki}(S_y)_{kj}}\,
(X_\star)_{ki}\,(X_\star u_0)_k\,(v_0)_j .
\]
Independent MCAR masks give $\E{(S_x)_{ki}(S_y)_{kj}}=\rhox\rhoy=\rho$, and combining with $X_\star^\top X_\star = N\,I_{\Dx}$ yields
\[
\E{C_{ij}}=\sqrt{\rho}\,\theta\,(u_0)_i\,(v_0)_j .
\]
Thus the deterministic spike amplitude is $\theff=\sqrt{\rho}\,\theta$.

\paragraph{Noise scaling.}
Conditional on the mask, the $j$-th column of $S_y\odot Z$ is Gaussian with covariance equal to the $N\times N$ diagonal matrix whose $k$-th diagonal entry is $(S_y)_{kj}$.
Multiplication by $X^\top$ yields a (conditionally) Gaussian matrix with covariance controlled by $X^\top X$.
Since $X=S_x\odot X_\star$, one has $X^\top X \approx \rhox N I$ in the proportional limit,
and masking in $Y$ contributes an additional factor $\rhoy$, leading to entrywise variance of order $\rho/N$.
The normalization by $\sqrt{\rho}$ in $C$ therefore produces an effective noise level $1/\sqrt{N}$,
matching the standard spiked rectangular model
\[
C=\theff\,u_0v_0^\top+\frac{1}{\sqrt{N}}W,
\qquad W_{ij}\stackrel{\mathrm{iid}}{\sim}\mathcal{N}(0,1),
\]
up to negligible (lower-order) terms.

\subsection{Replica Analysis: Full Derivation}
\label{app:replica_outline}

This subsection gives the full machinery summarised in \autoref{sec:proof_sketch}. By \autoref{red:spiked} it suffices to analyse the spiked rectangular form $C = \theff u_0 v_0^\top + N^{-1/2}W$ with $W_{ij}\sim\mathcal{N}(0,1)$ i.i.d. The derivation proceeds in nine steps: Gibbs formulation, overlap extraction by source-field differentiation, the replica trick, the Gaussian disorder average, saddle-point reduction with the Wishart Jacobian, the replica-symmetric ansatz and the $w\to 0$ limit, the zero-temperature limit, susceptibility optimization, and self-averaging.

\paragraph{Gibbs formulation and source fields.}
\label{app:gibbs}
We reformulate the SVD optimization as sampling from a Gibbs distribution at low temperature. With inverse temperature $\beta > 0$ and source fields $h_x, h_y \ge 0$, the Gibbs partition function is
\begin{equation}
\label{eq:Z_main}
\begin{aligned}
Z_\beta(h_x,h_y) &:= \int \delta(\|u\|^2 - 1)\, \delta(\|v\|^2 - 1) \\
&\quad \times \exp\!\bigl(\beta N\,u^\top C v\bigr) \exp\!\bigl(N h_x\,u^\top u_0\bigr) \exp\!\bigl(N h_y\,v^\top v_0\bigr) \,du\,dv,
\end{aligned}
\end{equation}
integrated over $u \in \mathbb{R}^{\Dx}$, $v \in \mathbb{R}^{\Dy}$ with delta functions restricting to the unit spheres; the factor $N$ ensures extensive scaling, and the source terms are $N$-scaled so that they survive the $N^{-1}\log Z$ limit below. The source fields play two roles: they break the sign symmetry $(u, v) \mapsto (-u, -v)$ of \eqref{eq:plssvd}, which would otherwise force $\E{\hat u^\top u_0} = 0$, and they enable overlap extraction by differentiation, which we make precise next.

\paragraph{Overlap extraction via stationary-point differentiation.}
\label{app:envelope}
Define $E(u,v) := \beta N\,u^\top C v + N h_x\,u^\top u_0 + N h_y\,v^\top v_0$. Since $\partial E/\partial h_x = N\,u^\top u_0$,
\[
\frac{\partial}{\partial h_x}\!\left(\frac{1}{N}\log Z_\beta(h_x,h_y)\right) = \langle u^\top u_0 \rangle_{\beta,h_x,h_y},
\qquad
\frac{\partial}{\partial h_y}\!\left(\frac{1}{N}\log Z_\beta(h_x,h_y)\right) = \langle v^\top v_0 \rangle_{\beta,h_x,h_y},
\]
where $\langle \cdot \rangle$ denotes Gibbs expectation. To pass to the typical free energy, define the value function
\[
f_\beta(h_x,h_y) := \stat_{(q_u,q_v,r_u,r_v)\in\mathcal{D}} \Phi_\beta(q_u,q_v,r_u,r_v;\,h_x,h_y),
\]
with $\mathcal{D}$ the domain of valid order parameters and $\Phi_\beta$ the RS free-energy density \eqref{eq:Phi_main}. Let $(q_u^\star, q_v^\star, r_u^\star, r_v^\star)$ denote the stationary point. Differentiating $f_\beta$ by the chain rule and using stationarity ($\partial \Phi_\beta/\partial p_j|_{p^\star} = 0$ for $p \in \{q_u, q_v, r_u, r_v\}$) collapses the implicit terms; only the explicit dependence $h_x r_u + h_y r_v$ in $\Phi_\beta$ survives, giving
\[
\frac{\partial f_\beta}{\partial h_x} = r_u^\star(h_x, h_y),
\qquad
\frac{\partial f_\beta}{\partial h_y} = r_v^\star(h_x, h_y).
\]
Taking $\beta \to \infty$ first concentrates the Gibbs measure on the SVD solution; then $h_x, h_y \downarrow 0^+$ removes the symmetry-breaking field while selecting the positive-overlap branch. Therefore
\[
R_x^2 = (r_u^\star)^2, \qquad R_y^2 = (r_v^\star)^2,
\]
and the remaining task is to compute $f_\beta$ via the replica trick.

\paragraph{Replica trick.}
\label{app:replica_trick}
The typical free energy is obtained through
\begin{equation}
\label{eq:replica_main}
\E{\log Z_\beta(h_x,h_y)} \;=\; \lim_{w\to 0}\frac{1}{w}\,\log \E{Z_\beta(h_x,h_y)^w}.
\end{equation}
For integer $w \ge 1$,
\begin{equation}
\label{eq:Zw_main}
\begin{aligned}
Z_\beta(h_x,h_y)^w &= \int \prod_{a=1}^{w} \delta(\|u^a\|^2 - 1)\, \delta(\|v^a\|^2 - 1) \\
&\quad \times \exp\!\Bigl(\beta N\sum_{a=1}^w u^{a\top}C v^a\Bigr) \exp\!\Bigl(N h_x\sum_{a=1}^w u^{a\top}u_0\Bigr) \exp\!\Bigl(N h_y\sum_{a=1}^w v^{a\top}v_0\Bigr) \,dU\,dV,
\end{aligned}
\end{equation}
with $u^a \in \mathbb{R}^{\Dx}$, $v^a \in \mathbb{R}^{\Dy}$ the replica vectors and $dU\,dV$ integration over all replicas. The next step is to integrate out the disorder $W$.

\paragraph{Gaussian disorder average.}
\label{app:gauss}
Substituting $C = \theff u_0v_0^\top + N^{-1/2}W$ into $\sum_a u^{a\top}C v^a$ splits the exponent into a deterministic signal piece and a $W$-dependent piece:
\begin{equation}
\label{eq:split_main}
\beta N\sum_{a=1}^w u^{a\top}C v^a = \beta N\theff\sum_{a=1}^w (u^{a\top}u_0)(v^{a\top}v_0) + \beta\sqrt{N}\sum_{a=1}^w u^{a\top}Wv^a .
\end{equation}
Define the Gram matrices and magnetization vectors
\begin{equation}
\label{eq:orderparams_main}
\begin{aligned}
(Q_u)_{ab} &:= u^{a\top}u^b,
&\quad (Q_v)_{ab} &:= v^{a\top}v^b,\\
(r_u)_a &:= u^{a\top}u_0,
&\quad (r_v)_a &:= v^{a\top}v_0 ,
\end{aligned}
\end{equation}
and average over $W$ using the following moment-generating identity.

\begin{lemma}[Gaussian disorder average]
\label{lem:gauss_app}
For i.i.d.\ $W_{ij}\sim\mathcal{N}(0,1)$ and the order parameters defined above,
\[
\mathbb{E}_W\!\left[\exp\!\Bigl(\beta\sqrt{N}\sum_{a=1}^w u^{a\top}Wv^a\Bigr)\right]
= \exp\!\Bigl(\tfrac{\beta^2 N}{2}\,\Tr(Q_uQ_v^\top)\Bigr).
\]
\end{lemma}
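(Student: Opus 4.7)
The plan is to reduce the expectation to a one-dimensional Gaussian moment generating function by viewing the exponent as a single linear functional of the i.i.d.\ entries of $W$. First I would rewrite the sum in index form,
\[
\sum_{a=1}^{w} u^{a\top} W v^a
= \sum_{i,j} W_{ij}\,M_{ij},
\qquad
M_{ij} := \sum_{a=1}^{w} u_i^{a} v_j^{a}.
\]
Since the $W_{ij}$ are i.i.d.\ standard Gaussian, the random variable $S := \sum_{ij} W_{ij} M_{ij}$ is Gaussian with mean zero and variance $\|M\|_F^{2}$, so the standard scalar Gaussian MGF gives
\[
\mathbb{E}_W\!\left[\exp\!\bigl(\beta\sqrt{N}\,S\bigr)\right]
= \exp\!\left(\frac{\beta^{2} N}{2}\,\|M\|_F^{2}\right).
\]

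Next I would identify $\|M\|_F^{2}$ with $\Tr(Q_u Q_v^{\top})$ by a direct combinatorial expansion:
\[
\|M\|_F^{2}
= \sum_{i,j}\Bigl(\sum_{a} u_i^{a} v_j^{a}\Bigr)\Bigl(\sum_{b} u_i^{b} v_j^{b}\Bigr)
= \sum_{a,b} \Bigl(\sum_i u_i^{a} u_i^{b}\Bigr)\Bigl(\sum_j v_j^{a} v_j^{b}\Bigr)
= \sum_{a,b}(Q_u)_{ab}(Q_v)_{ab},
\]
and the last expression is exactly $\Tr(Q_u Q_v^{\top})$. Substituting back yields the claimed identity.

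There is essentially no obstacle here: the only step that requires care is swapping the order of summation in the Frobenius norm expansion so as to recognize the two Gram matrices simultaneously. Everything else is the scalar Gaussian MGF applied to a deterministic linear form in $W$. Since the integrability of the moment generating function is automatic for a non-degenerate real Gaussian, no additional convergence argument is needed; the identity holds for all real $\beta$, $N$, and finite replica index $w$, and in particular extends to the analytic continuation used in the replica limit $w\to 0$.
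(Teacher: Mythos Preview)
Your proof is correct and essentially identical to the paper's own argument: both write $\sum_a u^{a\top}Wv^a=\sum_{ij}W_{ij}M_{ij}$ with $M_{ij}=\sum_a u_i^a v_j^a$, apply the scalar Gaussian MGF to obtain $\exp\bigl(\tfrac{\beta^2 N}{2}\sum_{ij}M_{ij}^2\bigr)$, and then expand $\sum_{ij}M_{ij}^2=\sum_{a,b}(Q_u)_{ab}(Q_v)_{ab}=\Tr(Q_uQ_v^\top)$. The only cosmetic difference is that the paper factorizes the expectation as a product over independent entries while you first aggregate into a single scalar Gaussian $S$; these are equivalent formulations of the same computation.
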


\begin{proof}
Write $\sum_a u^{a\top}Wv^a = \sum_{i,j}W_{ij}A_{ij}$ with $A_{ij} := \sum_a u_i^a v_j^a$. By the Gaussian moment-generating function and entrywise independence,
\[
\mathbb{E}_W\!\left[\exp\!\Bigl(\beta\sqrt{N}\sum_{i,j}W_{ij}A_{ij}\Bigr)\right]
= \prod_{i,j}\E{\exp(\beta\sqrt{N}\,W_{ij}A_{ij})}
= \exp\!\Bigl(\tfrac{\beta^2 N}{2}\sum_{i,j}A_{ij}^2\Bigr),
\]
and $\sum_{i,j}A_{ij}^2 = \sum_{a,b}\bigl(\sum_i u_i^a u_i^b\bigr)\bigl(\sum_j v_j^a v_j^b\bigr) = \sum_{a,b}(Q_u)_{ab}(Q_v)_{ab} = \Tr(Q_uQ_v^\top)$.
\end{proof}

Applying \autoref{lem:gauss_app} to \eqref{eq:Zw_main} yields
\begin{equation}
\label{eq:EZw_main}
\begin{aligned}
\E{Z_\beta(h_x,h_y)^w} &= \int \prod_{a=1}^w \delta(\|u^a\|^2 - 1)\, \delta(\|v^a\|^2 - 1) \exp\!\Bigl(\beta N\theff\sum_{a=1}^w (r_u)_a(r_v)_a\Bigr)\\
&\quad \times \exp\!\Bigl(\tfrac{\beta^2 N}{2}\,\Tr(Q_uQ_v^\top)\Bigr) \exp\!\Bigl(N h_x\sum_{a=1}^w(r_u)_a\Bigr) \exp\!\Bigl(N h_y\sum_{a=1}^w(r_v)_a\Bigr) \,dU\,dV.
\end{aligned}
\end{equation}
The exponent now depends on the replica vectors only through the order parameters $(Q_u, Q_v, r_u, r_v)$, so we change variables next.

\paragraph{Saddle-point reduction with the Wishart Jacobian.}
\label{app:jacobian}
Decompose each replica as $u^a = (r_u)_a u_0 + \tilde u^a$ with $\tilde u^a \perp u_0$, and stack $\tilde U = (\tilde u^1, \ldots, \tilde u^w) \in \mathbb{R}^{(\Dx-1)\times w}$. Then
\[
(Q_u)_{ab} = (r_u)_a (r_u)_b + (\tilde u^a)^\top \tilde u^b
\quad \Longrightarrow \quad
\tilde Q_u := Q_u - r_u r_u^\top = \tilde U^\top \tilde U.
\]
After the disorder average, the integrand depends on $\tilde U$ only through $\tilde Q_u$ (rotational invariance in the $(\Dx{-}1)$-dimensional subspace orthogonal to $u_0$). Integrating out the orientational degrees of freedom by the standard Gram-matrix (Wishart) Jacobian gives, for $n = \Dx - 1$,
\[
\int_{\mathbb{R}^{n\times w}} (\cdots)\, d\tilde U
= \mathrm{const}\cdot\int_{\tilde Q_u\succ 0} (\cdots)\, \det(\tilde Q_u)^{(n-w-1)/2}\, d\tilde Q_u,
\]
which to leading exponential order in $\Dx$ (with $w$ fixed) yields the factor $\det(Q_u - r_u r_u^\top)^{\Dx/2}$. Therefore
\[
\int dU \prod_a \delta(\|u^a\|^2 - 1)\, \delta(Q_u - U^\top U)\, \delta(r_u - U^\top u_0)
\propto \det(Q_u - r_u r_u^\top)^{\Dx/2},
\]
and the same argument applied to $V$ produces $\det(Q_v - r_v r_v^\top)^{\Dy/2}$. Using $\Dx = N/\alpha_x$ and $\Dy = N/\alpha_y$, the replicated action density becomes
\begin{equation}
\label{eq:Sw_main}
\begin{aligned}
\mathcal{S}_w &= \frac{1}{2\alpha_x}\log\det(Q_u-r_ur_u^\top) + \frac{1}{2\alpha_y}\log\det(Q_v-r_vr_v^\top) \\
&\quad + \beta\theff\sum_{a=1}^w(r_u)_a(r_v)_a + \frac{\beta^2}{2}\Tr(Q_uQ_v^\top)
        + h_x\sum_{a=1}^w(r_u)_a + h_y\sum_{a=1}^w(r_v)_a .
\end{aligned}
\end{equation}
We now evaluate $\mathcal{S}_w$ under the replica-symmetric ansatz and take $w \to 0$.

\paragraph{Replica-symmetric ansatz and the $w \to 0$ limit.}
\label{app:w0_limit}
Under RS, the $w \times w$ Gram matrix has $(Q_u)_{aa} = 1$ and $(Q_u)_{ab} = q_u$ for $a \neq b$, with $(r_u)_a = r_u$ for all $a$, and analogously for $v$. Then $\tilde Q_u := Q_u - r_u r_u^\top$ has diagonal $1 - r_u^2$ and off-diagonal $q_u - r_u^2$, so $\tilde Q_u = (1-q_u)I_w + (q_u - r_u^2)\one\one^\top$. A matrix $aI + b\one\one^\top$ has eigenvalues $a$ (multiplicity $w-1$) and $a + wb$ (multiplicity $1$); with $a = 1-q_u$ and $b = q_u - r_u^2$,
\[
\det(\tilde Q_u) = (1-q_u)^{w-1}\bigl(1-q_u + w(q_u - r_u^2)\bigr).
\]
Expanding $\log(A + wB) = \log A + wB/A + O(w^2)$ around $w = 0$,
$\log\det(\tilde Q_u) = w\log(1-q_u) + w(q_u - r_u^2)/(1-q_u) + O(w^2)$, so
\begin{equation}
\label{eq:det_w0_app}
\lim_{w\to 0}\frac{1}{w}\log\det(Q_u - r_ur_u^\top) = \log(1-q_u) + \frac{q_u - r_u^2}{1-q_u},
\end{equation}
with the symmetric formula for $v$. The trace term reduces to $\Tr(Q_u Q_v^\top) = w + w(w-1)q_u q_v$ under RS, giving
\begin{equation}
\label{eq:tr_w0_app}
\lim_{w\to 0}\frac{1}{w}\Tr(Q_u Q_v^\top) = 1 - q_u q_v.
\end{equation}
The signal term evaluates trivially: $\sum_{a=1}^w (r_u)_a (r_v)_a = w r_u r_v$, so the $w \to 0$ limit of $w^{-1}\sum_a (r_u)_a (r_v)_a$ is $r_u r_v$. The source terms similarly give $h_x r_u + h_y r_v$ after division by $w$ and the RS substitution. Combining these limits in $\mathcal{S}_w$ yields the RS free-energy density $\Phi_\beta$ of the main text [\eqref{eq:Phi_main}]. The remaining task is the $\beta \to \infty$ limit.

\paragraph{Zero-temperature limit.}
\label{app:zero_temp}
Since $\Phi_\beta$ contains terms like $\frac{\beta^2}{2}(1-q_u q_v)$ that diverge as $\beta \to \infty$, we extract the ground-state objective via $\Psi := \lim_{\beta \to \infty} \beta^{-1}\Phi_\beta$. As $\beta \to \infty$ the replicas become identical, forcing $q_u, q_v \to 1$. To avoid indeterminate forms, introduce susceptibilities $\chi_u := \beta(1-q_u)$ and $\chi_v := \beta(1-q_v)$, which remain $O(1)$. The entropy term $\frac{1}{2\alpha_x}\bigl(\log(1-q_u) + (q_u - r_u^2)/(1-q_u)\bigr)$ has logarithm $\log(\chi_u/\beta) = O(\log\beta)$ that vanishes after dividing by $\beta$, while the ratio gives $\beta(1-r_u^2)/\chi_u - 1$ and contributes $(1-r_u^2)/\chi_u$ in the limit. The noise term satisfies $1 - q_u q_v = (\chi_u + \chi_v)/\beta + O(\beta^{-2})$, so $\frac{\beta^2}{2}(1-q_u q_v) = \frac{\beta(\chi_u + \chi_v)}{2} + O(1)$ and contributes $(\chi_u + \chi_v)/2$ after rescaling. The signal term $\beta\theff r_u r_v$ becomes $\theff r_u r_v$. Assembling,
\begin{equation}
\label{eq:psi_app}
\Psi = \frac{1-r_u^2}{2\alpha_x \chi_u} + \frac{1-r_v^2}{2\alpha_y \chi_v} + \frac{\chi_u + \chi_v}{2} + \theff r_u r_v.
\end{equation}

\paragraph{Susceptibility optimization.}
\label{app:suscept_opt}
The structure of \eqref{eq:psi_app} permits independent optimization over $\chi_u$ and $\chi_v$. The $\chi_u$-dependent terms form $f(\chi_u) = (1-r_u^2)/(2\alpha_x \chi_u) + \chi_u/2$, with stationarity $-(1-r_u^2)/(2\alpha_x \chi_u^2) + 1/2 = 0$ giving
\[
\chi_u^\star = \sqrt{(1-r_u^2)/\alpha_x},
\qquad
f(\chi_u^\star) = \chi_u^\star = \frac{\sqrt{1-r_u^2}}{\sqrt{\alpha_x}}.
\]
The same calculation with $(\alpha_y, r_v)$ gives $\chi_v^\star = \sqrt{(1-r_v^2)/\alpha_y}$ and contribution $\sqrt{1-r_v^2}/\sqrt{\alpha_y}$. Substituting these optima into $\Psi$ produces the two-parameter objective $\Psi(r_u, r_v)$ of the main text [\eqref{eq:Psi_main}], whose stationarity conditions yield the threshold and the closed-form overlaps.

\paragraph{Self-averaging and replica-symmetry validity.}
\label{app:self_averaging}
Two assumptions underpin the replica method as used above. First, in the proportional limit $N, \Dx, \Dy \to \infty$, the free-energy density $N^{-1}\log Z(h_x, h_y)$ concentrates around its expectation with fluctuations of order $O(1/\sqrt{N})$; this self-averaging property justifies replacing $\E{\log Z}$ with a saddle-point evaluation of $\log\E{Z^w}$ and is rigorously established for many spin glass and random matrix models. Second, the replica-symmetric ansatz assumes all replicas are statistically equivalent, which holds when the Gibbs measure concentrates on a single dominant configuration and the RS saddle is locally stable. For the limiting rectangular spiked model obtained after \autoref{red:spiked}, the RS overlap expressions coincide with the rigorous BBP formulas for finite-rank rectangular perturbations \citep{baik2005bbp,benaychgeorges2012singular}: in the supercritical regime $\theff^4 > 1/(\alpha_x \alpha_y)$ this follows from the BBP transition, and in the subcritical regime the overlap is trivially zero. The masked-data step itself should be understood through the reduction approximation rather than as a fully rigorous operator-norm proof for the original masked cross-covariance.
% \paragraph{Validity of the RS ansatz.}
% The replica-symmetric ansatz assumes all replicas are statistically equivalent.
% This is justified when:
% \begin{enumerate}
% \item \textbf{Uniqueness of the Gibbs measure:} The measure concentrates on a single dominant configuration rather than fragmenting into multiple ``pure states.''
% \item \textbf{de Almeida--Thouless stability:} The RS saddle is a local maximum in the space of replica matrices.
% \end{enumerate}
% For the spiked matrix models considered here, RS is known to be exact in both regimes:
% \begin{itemize}
% \item \textbf{Supercritical} ($\theff^4 > 1/(\alpha_x\alpha_y)$): This follows from the connection to the BBP (Baik--Ben Arous--P\'ech\'e) transition for spiked random matrices, where rigorous results confirm the replica predictions.
% \item \textbf{Subcritical} ($\theff^4 < 1/(\alpha_x\alpha_y)$): The overlap is trivially zero, so RS is automatically correct.
% \end{itemize}

% ============================================================
\clearpage
\section{Additional Experiments}
\label{app:robustness}

This appendix presents additional experiments: detailed missingness and split-half diagnostics, plus robustness checks for non-Gaussian noise, MAR mechanisms, baseline imputation methods, the rank-$k$ extension, correlated noise, and aspect-ratio / preprocessing sensitivity.

\subsection{Missingness and Split-Half Diagnostics: Detailed Results}
\label{app:diagnostics}

\autoref{fig:missingness_and_diagnostics} reports two complementary investigations: the effect of single-view vs.\ joint missingness on the phase boundary (\autoref{fig:missingness_comparison}; discussed in \autoref{sec:results}, Missingness Effects), and the split-half stability diagnostic that flags trustworthy recovery without ground truth (\autoref{fig:practical_diagnostics}; discussed in \autoref{sec:results}, Practical Diagnostics).

\begin{figure}[!b]
\centering
\begin{subfigure}[b]{0.45\textwidth}
\centering
\includegraphics[width=\textwidth]{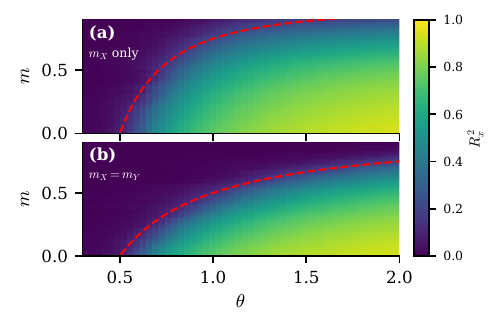}
\caption{Missingness phase diagrams.}
\label{fig:missingness_comparison}
\end{subfigure}\hfill
\begin{subfigure}[b]{0.54\textwidth}
\centering
\includegraphics[width=\textwidth]{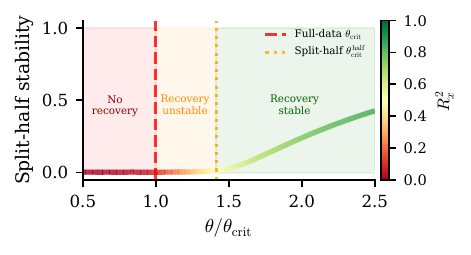}
\caption{Split-half stability diagnostic.}
\label{fig:practical_diagnostics}
\end{subfigure}
\vspace{0.5em}
\caption{%
\textbf{(a) Single-view vs.\ joint missingness on PLS-SVD recovery.} Heatmap: empirical $R_x^2$ across $(\theta, m)$. Single-view ($m_y=0$, $\rho=1-m$): boundary $\theta_{\mathrm{crit}} = 1/[(\alpha_x\alpha_y)^{1/4}\sqrt{1-m}]$. Joint ($m_x=m_y=m$, $\rho=(1-m)^2$): boundary $\theta_{\mathrm{crit}} = 1/[(\alpha_x\alpha_y)^{1/4}(1-m)]$, steeper. Parameters: $N=800$, $D_x=D_y=200$ ($\alpha_x=\alpha_y=4$), $50\times 50$ grid, 30 trials.
\textbf{(b) Split-half stability as a ground-truth-free phase-transition diagnostic.} Stability is the correlation between singular vectors from two random data halves; it distinguishes three regimes (shaded). Red: $\theta < \theta_{\mathrm{crit}}$, no recovery. Orange: $\theta_{\mathrm{crit}} < \theta < \sqrt{2}\,\theta_{\mathrm{crit}}$, full-data recovery but unstable splits (each half uses $N/2$, halving aspect ratios). Green: $\theta > \sqrt{2}\,\theta_{\mathrm{crit}}$, stable. Line color tracks true $R_x^2$. Parameters: $N=2000$, $\alpha_x=\alpha_y=7.5$, $m_x=m_y=0.1$, 25 trials.
}
\label{fig:missingness_and_diagnostics}
\end{figure}

\subsection{Robustness to Non-Gaussian Noise}
\label{app:nongaussian}

Our analysis assumes Gaussian noise in the response matrix $Y_\star$. We test whether the phase transition location and overlap predictions remain accurate under heavy-tailed and heteroskedastic noise distributions.

\paragraph{Experimental setup.}
We fix $N=1000$, $\Dx=200$, $\Dy=150$ (aspect ratios $\alpha_x=5$, $\alpha_y=6.67$), and missingness rates $m_x=m_y=0.3$ ($\rho=0.49$), running 100 independent trials per configuration. We compare six noise distributions with unit variance:
(1)~\textbf{Gaussian:} $Z_{ij} \sim \mathcal{N}(0,1)$ (baseline, excess kurtosis $\kappa=0$);
(2)~\textbf{Student-$t$ ($\nu=5$):} standardized ($\kappa=6$);
(3)~\textbf{Student-$t$ ($\nu=4.5$):} standardized ($\kappa=12$);
(4)~\textbf{Student-$t$ ($\nu=3$):} standardized ($\kappa=\infty$);
(5)~\textbf{Laplace:} standardized ($\kappa=3$);
(6)~\textbf{Heteroskedastic:} $Z_{ij} \sim \mathcal{N}(0, \sigma_j^2)$ with $\sigma_j^2 \sim \mathrm{Uniform}[0.5, 1.5]$.

\paragraph{Results.}
\autoref{fig:nongaussian} presents the results. \textbf{(a)} Phase transition curves for $R_x^2$ show that all noise types exhibit a sharp transition near the theoretical threshold $\theta_{\mathrm{crit}}$. The transition location is robust, though recovery curves for Student-$t(\nu=3)$ lie below the Gaussian baseline in the supercritical regime. \textbf{(b)} Phase transition curves for $R_y^2$ display analogous behavior. \textbf{(c)} Deviation from theory (mean absolute error $|R_x^2 - r_x^2|$ for $\theta > 1.1\theta_{\mathrm{crit}}$) grows with excess kurtosis but remains below 5\%, except for Student-$t(\nu=3)$.

\begin{figure}[!h]
\centering
\includegraphics[width=0.95\textwidth]{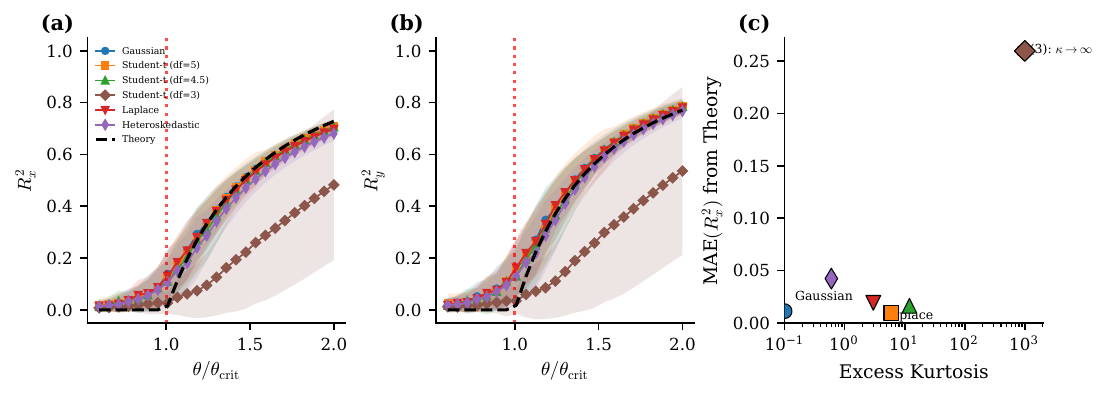}
\caption{%
\textbf{Robustness of phase transition to non-Gaussian noise.}
\textbf{(a)} Phase transition in $R_x^2$ across noise types. All distributions exhibit a transition near the critical threshold $\theta_{\mathrm{crit}}$ (red line). Only the Student-$t(\nu=3)$, which has infinite kurtosis, shows noticeably reduced recovery in the supercritical regime.
\textbf{(b)} Corresponding phase transition in $R_y^2$.
\textbf{(c)} Theory deviation versus excess kurtosis. Deviation grows with kurtosis but remains bounded.
Parameters: $N=1000$, $\Dx=200$, $\Dy=150$, $m_x=m_y=0.3$, 100 trials.
}
\label{fig:nongaussian}
\end{figure}

The Student-$t(\nu = 3)$ result is a stress test, not a case covered by the Gaussian replica calculation: the transition location remains close to the MCAR prediction, but the supercritical overlaps degrade because infinite-kurtosis noise lies outside the assumptions of the disorder average.

\subsection{Robustness to MAR Mechanisms}
\label{app:mar}

Our analysis assumes MCAR missingness, where the mask is independent of the data. We test robustness when missingness depends on the observed values (MAR mechanisms).

\paragraph{MAR mechanisms.}
We consider four mechanisms that create dependence between missingness and data values:
(1)~\textbf{Signal-dependent:} missing probability increases with $|X_\star u_0|$, the projection onto the signal direction;
(2)~\textbf{Magnitude-dependent:} missing probability increases with $|X_{\star,ij}|$;
(3)~\textbf{Thresholded:} entries above a threshold are missing with higher probability;
(4)~\textbf{Correlated:} missingness in $Y$ depends on $X$ values.
All mechanisms are calibrated to achieve marginal missing rate $m=0.3$ on average.

\paragraph{Experimental setup.}
We use $N=1000$, $\Dx=200$, $\Dy=150$, with MAR strength $\gamma$ varying from 0 (MCAR) to 1 (strong MAR), running 100 trials per configuration.

\paragraph{Results.}
\autoref{fig:mar} presents the results. \textbf{(a)} Phase transition curves under different MAR mechanisms at strength $\gamma=0.5$. The transition location remains near $\theta_{\mathrm{crit}}$ for all mechanisms, though signal-dependent MAR causes the most degradation in the supercritical regime. \textbf{(b)} Deviation from MCAR theory as a function of MAR strength. Signal-dependent MAR is most harmful (deviation $\approx 0.5$ at full strength); magnitude-dependent MAR is most benign (deviation $< 0.2$). \textbf{(c)} 2D heatmap of $R_x^2$ across $(\theta, \gamma)$ for signal-dependent MAR. The empirical transition shifts toward larger $\theta/\theta_{\mathrm{crit}}$ as $\gamma$ increases.

\begin{figure}[!t]
\centering
\includegraphics[width=0.95\textwidth]{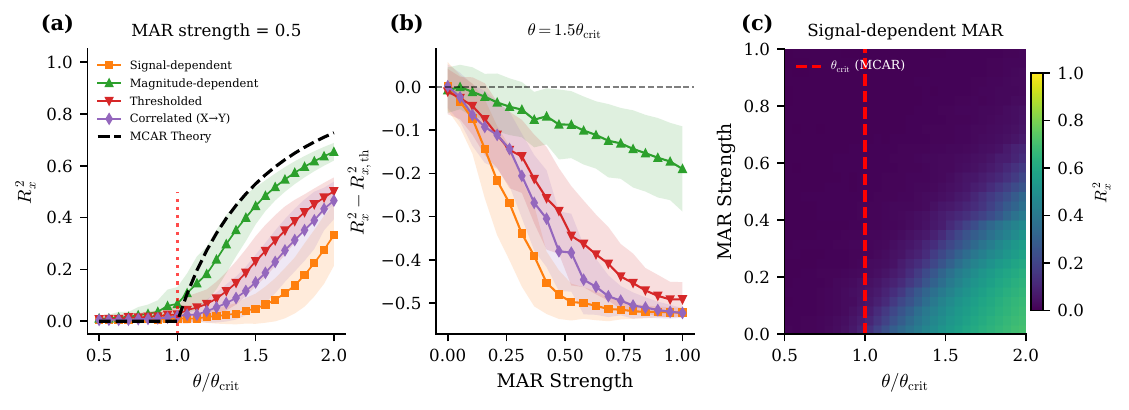}
\caption{%
\textbf{Robustness of phase transition to MAR missingness mechanisms.}
\textbf{(a)} Phase transition curves under four MAR mechanisms at strength $\gamma=0.5$. All exhibit transitions near the MCAR threshold (red line), with signal-dependent MAR showing the most degradation.
\textbf{(b)} Theory deviation versus MAR strength. Signal-dependent MAR is most harmful; magnitude-dependent MAR has minimal effect.
\textbf{(c)} Recovery heatmap for signal-dependent MAR across $(\theta/\theta_{\mathrm{crit}}, \gamma)$. The empirical transition shifts toward larger $\theta/\theta_{\mathrm{crit}}$ as $\gamma$ increases.
Parameters: $N=1000$, $\Dx=200$, $\Dy=150$, target $m=0.3$, 100 trials.
}
\label{fig:mar}
\end{figure}

The MCAR theory provides a reasonable approximation even under moderate MAR violations. The phase boundary location is robust because it depends primarily on the marginal retention probability $\rho$ rather than the precise missingness mechanism. Signal-dependent MAR is most harmful because it preferentially removes high-signal samples, reducing effective signal strength beyond the $\sqrt{\rho}$ attenuation predicted by MCAR theory.

\begin{figure}[!b]
\centering
\includegraphics[width=0.95\textwidth]{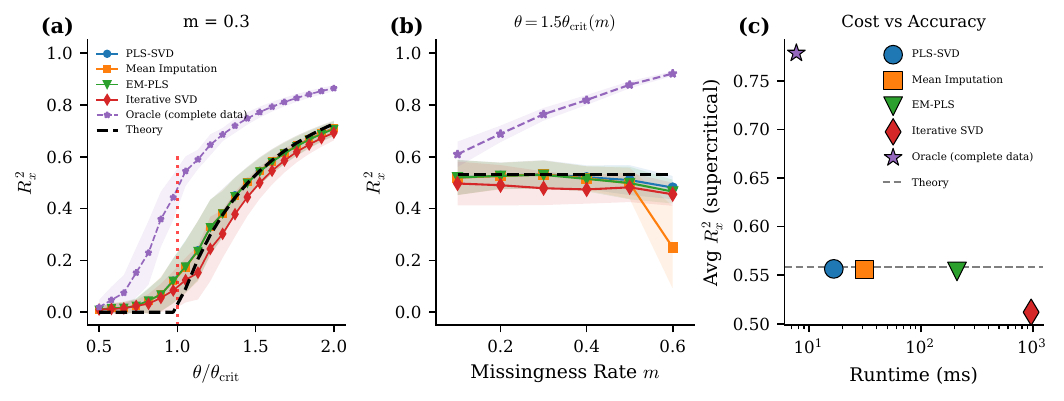}
\caption{%
\textbf{Baseline imputation methods under MCAR.}
\textbf{(a)} Phase transition curves. PLS-SVD (blue) matches the RS prediction (dashed); no tested baseline exceeds it. Oracle (purple) shows the complete-data upper bound.
\textbf{(b)} Recovery versus missingness rate at $\theta=1.5\theta_{\mathrm{crit}}$. All methods track the RS prediction.
\textbf{(c)} Runtime versus accuracy. PLS-SVD tracks the RS prediction at minimal computational cost; sophisticated methods add cost without benefit.
Parameters: $N=1000$, $\Dx=200$, $\Dy=150$, $m_x=m_y=0.3$, 50 trials.
}
\label{fig:baselines}
\end{figure}

\subsection{Comparison with Baseline Imputation Methods under MCAR}
\label{app:baselines}

We test whether sophisticated imputation methods improve on the simple missing-as-zero estimator under MCAR.

\paragraph{Methods.}
(1)~\textbf{PLS-SVD:} Missing-as-zero with $C = (N\sqrt{\rho})^{-1}X^\top Y$.
(2)~\textbf{Mean Imputation:} Replace missing entries with column means, then standard PLS-SVD.
(3)~\textbf{EM-PLS:} Iterative EM algorithm alternating between imputation and PLS estimation.
(4)~\textbf{Iterative SVD:} Low-rank SVD imputation followed by PLS-SVD.
(5)~\textbf{Oracle:} PLS-SVD on the complete (unmasked) data $X_\star, Y_\star$ (upper bound).

\paragraph{Experimental setup.}
We use $N=1000$, $\Dx=200$, $\Dy=150$, with $m_x=m_y=0.3$, sweeping $\theta$ from $0.5\theta_{\mathrm{crit}}$ to $2.0\theta_{\mathrm{crit}}$ with 50 trials per configuration.

\paragraph{Results.}
\autoref{fig:baselines} presents the results. \textbf{(a)} Phase transition curves for all methods. Under MCAR, missing-as-zero PLS-SVD with the $(N\sqrt{\rho})^{-1}X^\top Y$ normalization tracks the RS prediction; Mean Imputation behaves similarly; EM-PLS and Iterative SVD do not improve on PLS-SVD; the Oracle bounds the performance attainable without masking. \textbf{(b)} Recovery $R_x^2$ as a function of missingness rate $m$ at fixed supercritical signal $\theta=1.5\theta_{\mathrm{crit}}$. \textbf{(c)} Runtime versus accuracy at $\theta=1.5\theta_{\mathrm{crit}}$, $m=0.3$. PLS-SVD tracks the RS prediction at the lowest computational cost ($\approx$12ms). EM-PLS is $\approx$12$\times$ slower with no accuracy benefit; Iterative SVD is $\approx$65$\times$ slower.

Within this family of spectral estimators, missing-as-zero PLS-SVD is competitive with the tested baselines under MCAR. The $\sqrt{\rho}$ normalization accounts for signal attenuation; in these experiments, iteratively re-imputing missing entries does not improve the observed recovery curve. The gap to the Oracle reflects the cost of masking under the analyzed estimator.

\subsection{Rank-$k$ Extension}
\label{app:rank_k}

We test \autoref{conj:rank_k} with three sub-experiments (\autoref{fig:rank_k}), each using $N=1000$, $\Dx=200$, $\Dy=150$, $m_x=m_y=0.3$ ($\rho=0.49$), and 100 trials per configuration.

\paragraph{Independence test.}
We sweep $\theta_2$ in a rank-2 model with $\theta_1$ held fixed in the supercritical regime, and compare the component-2 overlap against a standalone rank-1 run with the same $\theta_2$. Outside the collision zone $|\theta_2 - \theta_1| < 0.5\,\theta_{\mathrm{crit}}$, the two coincide with MAE $=0.006$, confirming that distinct spikes do not interact above their individual thresholds.

\paragraph{Subspace overlap.}
For equal spike strengths $\theta_1 = \cdots = \theta_k$ with $k \in \{1,2,3\}$, the squared subspace overlap collapses onto the rank-1 theory curve ($r > 0.999$ on 17 supercritical points each), indicating that \eqref{eq:main_overlaps} controls per-direction recovery quality regardless of $k$.

\paragraph{Per-component missingness degradation.}
At fixed $\theta_1 = 1.5\,\theta_{\mathrm{crit}}$ and $\theta_2 = 0.8\,\theta_{\mathrm{crit}}$, sweeping $m \in [0, 0.6]$ produces overlaps that track per-component theory ($r=0.997$, MAE $=0.016$). The weaker component drops below threshold near $m \approx 0.48$, matching the prediction $\theta_{\mathrm{crit}}(\rho) = 1/[(\alpha_x\alpha_y)^{1/4}\sqrt{\rho}]$ with $\rho = (1-m)^2$.

\begin{figure}[!t]
\centering
\includegraphics[width=0.95\textwidth]{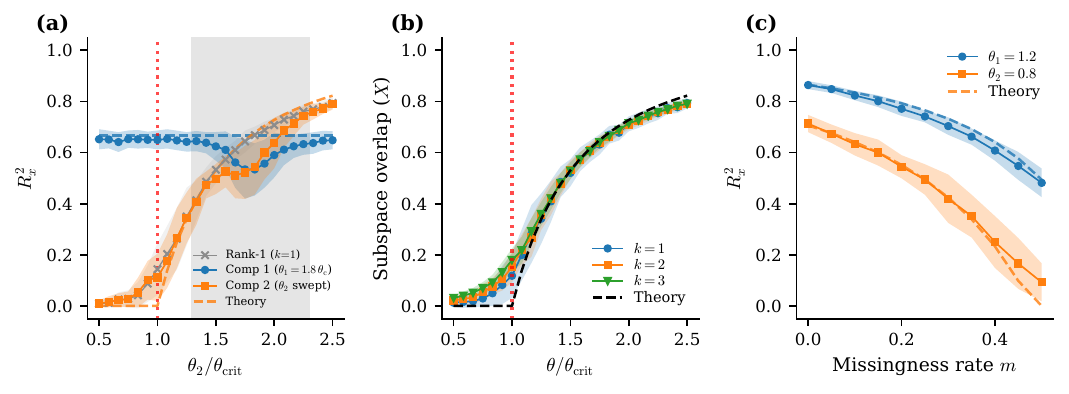}
\caption{%
\textbf{Empirical support for the rank-$k$ conjecture (\autoref{conj:rank_k}).}
\textbf{(a)} Rank-2 independent transitions: component 2 (orange, $\theta_2$ swept) and the standalone rank-1 baseline (gray) coincide outside the shaded collision zone (MAE $=0.006$).
\textbf{(b)} Subspace overlap for equal spikes: $k \in \{1,2,3\}$ curves collapse onto rank-1 theory (black dashed), $r > 0.999$.
\textbf{(c)} Per-component missingness degradation tracks theory ($r=0.997$, MAE $=0.016$); the weaker component drops below threshold near $m \approx 0.48$.
Parameters: $N=1000$, $\Dx=200$, $\Dy=150$, $m_x=m_y=0.3$ unless noted.
}
\label{fig:rank_k}
\end{figure}

\begin{figure}[!b]
\centering
\includegraphics[width=0.95\textwidth]{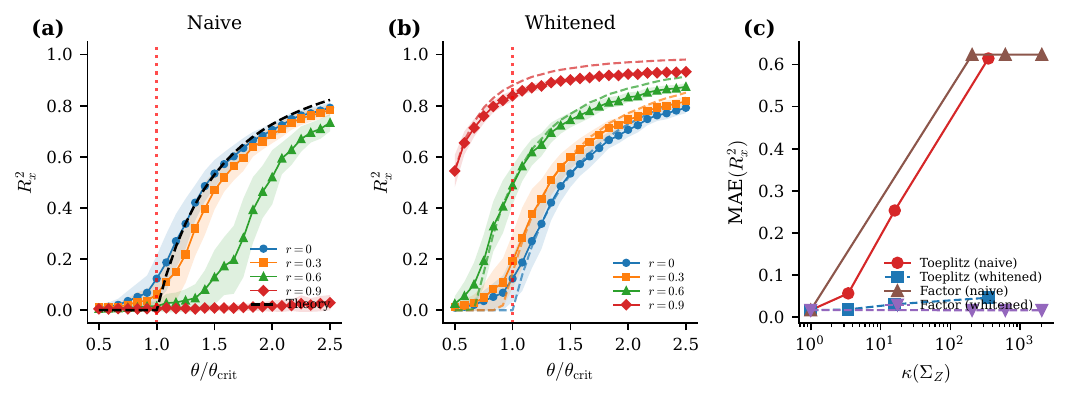}
\caption{%
\textbf{Phase transition under correlated noise.}
\textbf{(a)} Naive PLS-SVD under Toeplitz noise: increasing correlation $r$ progressively suppresses recovery; $r=0.9$ yields near-zero overlap.
\textbf{(b)} After oracle whitening by $\Sigma_Z^{-1/2}$, all four correlation levels track their adjusted theory curves with whitened MAE $< 0.05$.
\textbf{(c)} MAE versus noise condition number $\kappa(\Sigma_Z)$: naive MAE grows with $\kappa$, while whitened MAE remains near zero across Toeplitz and factor-model noise up to $\kappa \approx 2000$.
}
\label{fig:correlated}
\end{figure}

\subsection{Robustness to Correlated Noise}
\label{app:correlated_noise}

\autoref{rsf:main} assumes i.i.d.\ Gaussian noise. \autoref{fig:correlated} examines structured noise covariances under two models: Toeplitz $\Sigma_Z$ with parameter $r$, and factor-model $\Sigma_Z = I + \sigma^2\, FF^\top$ with a fixed factor matrix $F \in \mathbb{R}^{D_y \times 5}$ drawn at seed $123$ and $\sigma^2 \in \{0, 1, 3, 10\}$.

\paragraph{Naive PLS-SVD.}
Without correction, recovery degrades sharply with correlation. At Toeplitz $r=0.9$ ($\kappa(\Sigma_Z) \approx 350$), the leading overlap is essentially zero across the supercritical regime; at factor-model $\sigma^2 = 10$ ($\kappa \approx 2039$), naive MAE exceeds 0.6.

\paragraph{Oracle whitening.}
As an oracle diagnostic, we apply the known transformation $\Sigma_Z^{-1/2}$ to the complete response $Y_\star$ before masking and compute the adjusted spike $\theta\,\|\Sigma_Z^{-1/2} v_0\|$ per trial. This checks whether the RS prediction holds for the post-whitening effective model; it is not an in-sample estimator of $\Sigma_Z$. Across all four correlation levels the whitened MAE is $< 0.05$.

\paragraph{Scaling with conditioning.}
Naive MAE grows monotonically with $\kappa(\Sigma_Z)$, while whitened MAE remains flat near zero across both noise families up to $\kappa \approx 2000$.

\subsection{Aspect Ratios and Count-Data Preprocessing}
\label{app:aspect_ratios}

\begin{figure}[!b]
\centering
\includegraphics[width=0.95\textwidth]{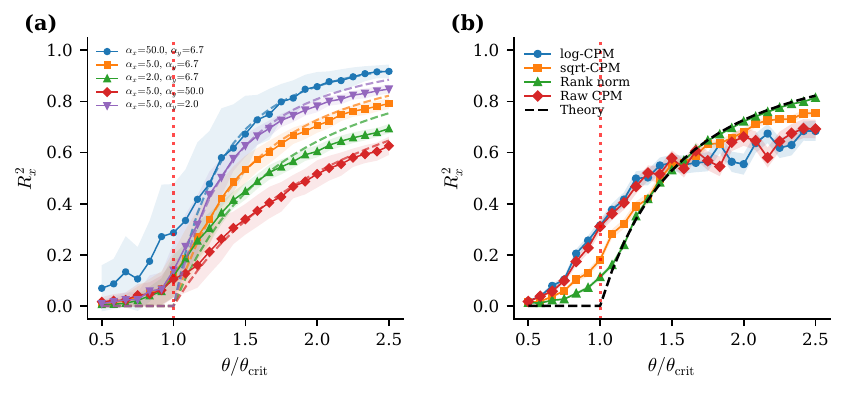}
\caption{%
\textbf{Aspect ratios and count-data preprocessing.}
\textbf{(a)} Aspect-ratio sweeps over five full-column-rank regimes ($\alpha_x, \alpha_y \ge 2$): Thin-X, Moderate, Wide-X, Thin-Y, Fat-Y. Each tracks its own theory curve ($r > 0.999$, MAE $\le 0.026$).
\textbf{(b)} Preprocessing sensitivity on PBMC scRNA-seq: rank normalization (green) and sqrt-CPM (orange) track theory closely; log-CPM (blue) and raw CPM (red) deviate with higher variance, confirming that variance stabilization before whitening is required for count-valued data.
}
\label{fig:aspect_preprocessing}
\end{figure}

We use TCGA BRCA, a cancer RNA-seq/methylation dataset, and PBMC 10k Multiome, a single-cell scRNA-seq/scATAC-seq dataset (\autoref{tab:datasets}). \autoref{fig:aspect_preprocessing} addresses two scope questions: behavior under extreme aspect ratios in the synthetic regime, and the role of upstream preprocessing for count-valued data.

\paragraph{Extreme aspect ratios.}
We hold $N=1000$ and $m_x = m_y = 0.3$ ($\rho=0.49$) and vary $(\Dx, \Dy)$ across five full-column-rank regimes: Thin-X ($\alpha_x=50$, $\alpha_y=6.7$, MAE $=0.010$), Moderate ($\alpha_x=5$, $\alpha_y=6.7$, MAE $=0.017$), Wide-X ($\alpha_x=2$, $\alpha_y=6.7$), Thin-Y ($\alpha_x=5$, $\alpha_y=50$, MAE $=0.012$), and Fat-Y ($\alpha_x=5$, $\alpha_y=2$, MAE $=0.026$). All achieve $r > 0.999$.

\paragraph{Count-data preprocessing.}
The missing-as-zero encoding $X_{\mathrm{obs}} = S \odot X_\star$ is unbiased only when $\E{X_{\star,ij}} = 0$. For count-valued data this requires upstream centering and variance stabilization; the relevant question is which transformation suffices. We compare four pipelines on PBMC Multiome 10k scRNA-seq ($N=5000$ cells), each followed by PCA and whitening to identity covariance: log-CPM, sqrt-CPM, rank normalization, and raw CPM. Variance-stabilizing transforms align closely with theory: rank normalization gives MAE $=0.005$ ($r=0.9995$) and sqrt-CPM gives MAE $=0.033$ ($r=0.989$). By contrast, log-CPM (MAE $=0.101$) and raw CPM (MAE $=0.119$) deviate with higher variance. The transition region is broadly consistent across all four pipelines, but the supercritical overlap is well predicted only after variance stabilization.

% NeurIPS checklist
% \newpage
% \clearpage
% \input{checklist.tex}

\end{document}